\newtheorem{thm}{Theorem}
\newtheorem{lma}{Lemma}
\newtheorem{definition}{Definition}
\renewcommand{\algorithmiccomment}[1]{\bgroup\hfill$\triangleright$~{\scriptsize\textit{#1}}\egroup}
\def \B {\mathbb{B}}
\def \A {\mathcal{A}}
\def \E {\mathbb{E}}
\def \F  {\mathcal{F}}
\def \R {\mathbb{R}}
\def \Pr {\text{Pr}}
\def \PPP {\mathbb{P}}
\def \QQQ {\mathbb{Q}}
\def \ind {\mathds{1}}
\def \KL {\text{KL}}
\def \d {\text{d}}
\def \menu {\text{MENU}}
\def \tofu {\text{TOFU}}
\def \event {\mathcal{E}}
\title{Almost Optimal Algorithms for Linear Stochastic Bandits with Heavy-Tailed Payoffs}
\author{
Han Shao\thanks{The first two authors contributed equally.}\qquad
Xiaotian Yu\footnotemark[1]\qquad
Irwin King\qquad
Michael R. Lyu\\
Department of Computer Science and Engineering\\
The Chinese University of Hong Kong\\
\{hshao,xtyu,king,lyu\}@cse.cuhk.edu.hk
}
\begin{document}

\maketitle

\begin{abstract}

In linear stochastic bandits, it is commonly assumed that payoffs are with sub-Gaussian noises. In this paper, under a weaker assumption on noises, we study the problem of \underline{lin}ear stochastic {\underline b}andits with h{\underline e}avy-{\underline t}ailed payoffs (LinBET), where the distributions have finite moments of order $1+\epsilon$, for some $\epsilon\in (0,1]$. We rigorously analyze the regret lower bound of LinBET as $\Omega(T^{\frac{1}{1+\epsilon}})$, implying that finite moments of order 2 (i.e., finite variances) yield the bound of $\Omega(\sqrt{T})$, with $T$ being the total number of rounds to play bandits. The provided lower bound also indicates that the state-of-the-art algorithms for LinBET are far from optimal. By adopting median of means with a well-designed allocation of decisions and truncation based on historical information, we develop two novel bandit algorithms, where the regret upper bounds match the lower bound up to polylogarithmic factors. To the best of our knowledge, we are the first to solve LinBET optimally in the sense of the polynomial order on $T$.  Our proposed algorithms are evaluated based on synthetic datasets, and outperform the state-of-the-art results.
\end{abstract}

\section{Introduction}
The decision-making model named Multi-Armed Bandits (MAB), where at each time step an algorithm chooses an arm among a given set of arms and then receives a stochastic payoff with respect to the chosen arm, elegantly characterizes the tradeoff between exploration and exploitation in sequential learning. The algorithm usually aims at maximizing cumulative payoffs over a sequence of rounds. A natural and important variant of MAB is linear stochastic bandits with the expected payoff of each arm satisfying a linear mapping from the arm information to a real number. The model of linear stochastic bandits enjoys some good theoretical properties, e.g., there exists a closed-form solution of the linear mapping at each time step in light of ridge regression. Many practical applications take advantage of MAB and its variants to control decision performance, e.g., online personalized recommendations~\citep{li2010contextual} and resource allocations~\citep{lattimore2014optimal}.

In most previous studies of MAB and linear stochastic bandits, a common assumption is that noises in observed payoffs are sub-Gaussian conditional on historical information~\citep{abbasi2011improved,bubeck2012regret}, which encompasses cases of all bounded payoffs and many unbounded payoffs, e.g., payoffs of an arm following a Gaussian distribution. However, there do exist practical scenarios of non-sub-Gaussian noises in observed payoffs for sequential decisions, such as high-probability extreme returns in investments for financial markets~\citep{cont2000herd} and fluctuations of neural oscillations~\citep{roberts2015heavy}, which are called heavy-tailed noises. Thus, it is significant to completely study theoretical behaviours of sequential decisions in the case of heavy-tailed noises.

Many practical distributions, e.g., Pareto distributions and Weibull distributions, are heavy-tailed, which perform high tail probabilities compared with exponential family distributions. We consider a general characterization of heavy-tailed payoffs in bandits, where the distributions have finite moments of order $1+\epsilon$, where $\epsilon\in (0,1]$. When $\epsilon=1$, stochastic payoffs are generated from distributions with finite variances. When $0<\epsilon<1$, stochastic payoffs are generated from distributions with infinite variances~\citep{shao1993signal}. Note that, different from sub-Gaussian noises in the traditional bandit setting, noises from heavy-tailed distributions do not enjoy exponentially decaying tails, and thus make it more difficult to learn a parameter of an arm.

The regret of MAB with heavy-tailed payoffs has been well addressed by~\cite{bubeck2013bandits}, where stochastic payoffs have bounds on raw or central moments of order $1+\epsilon$. For MAB with finite variances (i.e., $\epsilon = 1$), the regret of truncation algorithms or median of means recovers the optimal regret for MAB under the sub-Gaussian assumption. Recently, \cite{medina2016no} investigated theoretical guarantees for the problem of \underline{lin}ear stochastic {\underline b}andits with h{\underline e}avy-{\underline t}ailed payoffs (LinBET). It is surprising to find that, for $\epsilon=1$, the regret of bandit algorithms by~\cite{medina2016no} to solve LinBET is $\widetilde O(T^{\frac{3}{4}})$~\footnote{We omit polylogarithmic factors of $T$ for $\widetilde O(\cdot)$.}, which is far away from the regret of the state-of-the-art algorithms (i.e., $\widetilde O(\sqrt{T})$) in linear stochastic bandits under the sub-Gaussian assumption~\citep{dani2008stochastic,abbasi2011improved}. Thus, the most interesting and non-trivial question is

\vspace{0.02in}
{\centerline{\em Is it possible to recover the regret of $\widetilde O(\sqrt{T})$ when $\epsilon =1 $ for LinBET?}}
\vspace{0.01in}

In this paper, we answer this question affirmatively.  Specifically, we investigate the problem of LinBET characterized by finite $(1+\epsilon)$-th moments, where $\epsilon \in(0,1]$. The problem of LinBET raises several interesting challenges. The first challenge is the lower bound of the problem, which remains unknown. The technical issues come from the construction of an elegant setting for LinBET, and the derivation of a lower bound with respect to $\epsilon$. The second challenge is how to develop a robust estimator for the parameter in LinBET, because heavy-tailed noises greatly affect errors of the conventional least-squares estimator. It is worth mentioning that \cite{medina2016no} has tried to tackle this challenge, but their estimators do not make full use of the contextual information of chosen arms to eliminate the effect from heavy-tailed noises, which eventually leads to large regrets. The third challenge is how to successfully adopt median of means and truncation to solve LinBET with regret upper bounds matching the lower bound as closely as possible.

\paragraph{Our Results.} First of all, we rigorously analyze the lower bound on the problem of LinBET, which enjoys a polynomial order on $T$ as $\Omega(T^{\frac{1}{1+\epsilon}})$.
The lower bound provides two essential hints: one is that finite variances in LinBET yield a bound of $\Omega(\sqrt{T})$, and the other is that algorithms by~\cite{medina2016no} are sub-optimal. Then, we develop two novel bandit algorithms to solve LinBET based on the basic techniques of median of means and truncation. Both the algorithms adopt the optimism in the face of uncertainty principle, which is common in bandit problems~\citep{abbasi2011improved,munos2014bandits}. The regret upper bounds of the proposed two algorithms, which are $\widetilde O(T^{\frac{1}{1+\epsilon}})$, match the lower bound up to polylogarithmic factors. To the best of our knowledge, we are the first to solve LinBET almost optimally. We conduct experiments based on synthetic datasets, which are generated by Student's $t$-distribution and Pareto distribution, to demonstrate the effectiveness of our algorithms. Experimental results show that our algorithms outperform the state-of-the-art results. The contributions of this paper are summarized as follows:

\begin{compactitem}
\item We provide the lower bound for the problem of LinBET characterized by finite $(1+\epsilon)$-th moments, where $\epsilon\in(0,1]$. In the analysis, we construct an elegant setting of LinBET, which results in a regret bound of $\Omega(T^{\frac{1}{1+\epsilon}})$ in expectation for any bandit algorithm.
\item We develop two novel bandit algorithms, which are named as MENU and TOFU (with details shown in Section~\ref{sec:algo}). The MENU algorithm adopts median of means with a well-designed allocation of decisions and the TOFU algorithm adopts truncation via historical information. Both algorithms achieve the regret $\widetilde O(T^{\frac{1}{1+\epsilon}})$ with high probability.
\item We conduct experiments based on synthetic datasets to demonstrate the effectiveness of our proposed algorithms. By comparing our algorithms with the state-of-the-art results, we show improvements on cumulative payoffs for MENU and TOFU, which are strictly consistent with theoretical guarantees in this paper.
\end{compactitem}

\section{Preliminaries and Related Work}
In this section, we first present preliminaries, i.e., notations and learning setting of LinBET. Then, we give a detailed discussion on the line of research for bandits with heavy-tailed payoffs.

\subsection{Notations}
For a positive integer $K$, $[K]\triangleq\{1,2,\cdots,K\}$. Let the $\ell$-norm of a vector $x\in\R^d$ be $\|x\|_\ell\triangleq(x_1^\ell+\cdots+x_d^\ell)^{\frac{1}{\ell}}$, where $\ell\geq 1$ and $x_i$ is the $i$-th element of $x$ with $i\in[d]$. For $r\in\R$, its absolute value is $|r|$, its ceiling integer is $\lceil r \rceil$, and its floor integer is $\lfloor r \rfloor$. The inner product of two vectors $x,y$ is denoted by $x^{\top}y=\langle x,y\rangle$. Given a positive definite matrix $A\in\R^{d\times d}$, the weighted Euclidean norm of a vector $x\in\R^d$ is $\|x\|_A=\sqrt{x^\top Ax}$. $\B(x,r)$ denotes a Euclidean ball centered at $x$ with radius $r\in\R_+$, where $\R_+$ is the set of positive numbers. Let $e$ be Euler's number, and $I_d\in\R^{d\times d}$ an identity matrix. Let $\mathds{1}_{\{\cdot\}}$ be an indicator function, and $\E[X]$ the expectation of $X$.

\subsection{Learning Setting}
For a bandit algorithm $\A$, we consider sequential decisions with the goal to maximize cumulative payoffs, where the total number of rounds for playing bandits is $T$. For each round $t=1,\cdots,T$, the bandit algorithm $\A$ is given a decision set $D_t\subseteq\R^d$ such that $\|x\|_2\leq D$ for any $x\in D_t$. $\A$ has to choose an arm $x_t\in D_t$ and then observes a stochastic payoff $y_t(x_t)$. For notation simplicity, we also write $y_t = y_t(x_t)$. The expectation of the observed payoff for the chosen arm satisfies a linear mapping from the arm to a real number as $y_t(x_t) \triangleq \langle x_t, \theta_*\rangle + \eta_t$, where $\theta_*$ is an underlying parameter with $\|\theta_*\|_2\leq S$ and $\eta_t$ is a random noise. Without loss of generality, we assume $\E\left[\eta_t|\F_{t-1}\right] = 0$, where $\F_{t-1}\triangleq\{x_1,\cdots,x_t\}\cup \{\eta_1,\cdots,\eta_{t-1}\}$ is a $\sigma$-filtration and $\F_0=\emptyset$. Clearly, we have $\E[y_t(x_t)|\F_{t-1}] =  \langle x_t, \theta_*\rangle$. For an algorithm $\A$, to maximize cumulative payoffs is equivalent to minimizing the regret as
\begin{align}
R(\A,T) \triangleq \left(\sum^T_{t=1}\langle x^*_t, \theta_*\rangle\right)- \left(\sum^T_{t=1}\langle x_t, \theta_*\rangle\right)= \sum^T_{t=1}\langle x^*_t-x_t, \theta_*\rangle,
\end{align}
where $x^*_t$ denotes the optimal decision at time $t$ for $\theta_*$, i.e., $x^*_t\in\arg\max_{x\in D_t} \langle x, \theta_*\rangle$. In this paper, we will provide high-probability upper bound of $R(\A,T)$ with respect to $\A$, and provide the lower bound for LinBET in expectation for any algorithm. The problem of LinBET is defined as below.
\begin{definition}[LinBET]\label{def:linbet}
Given a decision set $D_t$ for time step $t=1,\cdots,T$, an algorithm $\A$, of which the goal is to maximize cumulative payoffs over $T$ rounds, chooses an arm $x_t\in D_t$. With $\F_{t-1}$, the observed stochastic payoff $y_t(x_t)$ is conditionally heavy-tailed, i.e., $\E\left[|y_t|^{1+\epsilon}|\F_{t-1}\right]\leq b$ or $\E\left[|y_t-\langle x_t, \theta_*\rangle|^{1+\epsilon}|\F_{t-1}\right]\leq c$, where $\epsilon \in (0,1]$, and $b,c\in(0,+\infty)$.
\end{definition}

\subsection{Related Work}
The model of MAB dates back to 1952 with the original work by~\cite{robbins1952some}, and its inherent characteristic is the trade-off between exploration and exploitation. The asymptotic lower bound of MAB was developed by~\cite{lai1985asymptotically}, which is logarithmic with respect to the total number of rounds. An important technique called upper confidence bound was developed to achieve the lower bound~\citep{lai1985asymptotically,agrawal1995sample}. Other related techniques to solve the problem of sequential decisions include Thompson sampling~\citep{thompson1933likelihood,chapelle2011empirical,agrawal2012analysis} and Gittins index~\citep{gittins2011multi}.

The problem of MAB with heavy-tailed payoffs characterized by finite $(1+\epsilon)$-th moments has been well investigated~\citep{bubeck2013bandits,vakili2013deterministic,yupure}. \cite{bubeck2013bandits} pointed out that finite variances in MAB are sufficient to achieve regret bounds of the same order as the optimal regret for MAB under the sub-Gaussian assumption, and the order of $T$ in regret bounds increases when $\epsilon$ decreases. The lower bound of MAB with heavy-tailed payoffs has been analyzed~\citep{bubeck2013bandits}, and robust algorithms by~\cite{bubeck2013bandits}  are optimal. Theoretical guarantees by~\cite{bubeck2013bandits,vakili2013deterministic} are for the setting of finite arms. In~\cite{vakili2013deterministic}, primary theoretical results were presented for the case of $\epsilon>1$. We notice that the case of $\epsilon>1$ is not interesting, because it reduces to the case of finite variances in MAB.

For the problem of linear stochastic bandits, which is also named linear reinforcement learning by~\cite{auer2002using}, 
the lower bound is $\Omega(d\sqrt{T})$ when contextual information of arms is from a $d$-dimensional space~\citep{dani2008price}. Bandit algorithms matching the lower bound up to polylogarithmic factors have been well developed~\citep{auer2002using,dani2008stochastic,abbasi2011improved,chu2011contextual}. Notice that all these studies assume that stochastic payoffs contain sub-Gaussian noises. More variants of MAB can be discussed by~\cite{bubeck2012regret}.

It is surprising to find that the lower bound of LinBET remains unknown. In~\cite{medina2016no}, bandit algorithms based on truncation and median of means were presented. When $\epsilon$ is finite for LinBET, the algorithms by~\cite{medina2016no} cannot recover the bound of $\widetilde O(\sqrt{T})$ which is the regret of the state-of-the-art algorithms in linear stochastic bandits under the sub-Gaussian assumption. \cite{medina2016no} conjectured that it is possible to recover  $\widetilde O(\sqrt{T})$ with $\epsilon$ being a finite number for LinBET. Thus, it is urgent to conduct a thorough analysis of the conjecture in consideration of the importance of heavy-tailed noises in real scenarios. Solving the conjecture generalizes the practical applications of bandit models. Practical motivating examples for bandits with heavy-tailed payoffs include delays in end-to-end network routing~\citep{liebeherr2012delay} and sequential investments in financial markets~\citep{cont2000herd}.

Recently, the assumption in stochastic payoffs of MAB was relaxed from sub-Gaussian noises to bounded kurtosis~\citep{lattimore2017scale}, which can be viewed as an extension of~\cite{bubeck2013bandits}. The interesting point of~\cite{lattimore2017scale} is the scale free algorithm, which might be practical in applications. Besides, \cite{carpentier2014extreme} investigated extreme bandits, where stochastic payoffs of MAB follow Fr\'{e}chet distributions. The setting of extreme bandits fits for the real scenario of anomaly detection without contextual information. The order of regret in extreme bandits is characterized by distributional parameters, which is similar to the results by~\cite{bubeck2013bandits}.

It is worth mentioning that, for linear regression with heavy-tailed noises, several interesting studies have been conducted. \cite{hsu2016loss} proposed a generalized method in light of median of means for loss minimization with heavy-tailed noises. Heavy-tailed noises in~\cite{hsu2016loss} might come from contextual information, which is more complicated than the setting of stochastic payoffs in this paper. Therefore, linear regression with heavy-tailed noises usually requires a finite fourth moment. In~\cite{audibert2011robust}, the basic technique of truncation was adopted to solve robust linear regression in the absence of exponential moment condition. The related studies in this line of research are not directly applicable for the problem of LinBET.

\section{Lower Bound}
In this section, we provide the lower bound for LinBET. We consider heavy-tailed payoffs with finite $(1+\epsilon)$-th raw moments in the analysis. In particular, we construct the following setting. Assume $d\geq 2$ is even (when $d$ is odd, similar results can be easily derived by considering the first $d-1$ dimensions). For $D_t\subseteq\R^d$ with $t\in[T]$, we fix the decision set as $D_1=\cdots=D_T=D_{(d)}$. Then, the fixed decision set is constructed as $D_{(d)} \triangleq \{(x_1,\cdots,x_d)\in \R_+^d: x_1 + x_2 = \cdots = x_{d-1} + x_d = 1\}$, which is a subset of intersection of the cube $[0,1]^d$ and the hyperplane $x_1+\cdots+x_d = d/2$. We define a set $S_d \triangleq \{(\theta_1, \cdots, \theta_d): \forall i \in \left[ d/2 \right], (\theta_{2i-1}, \theta_{2i}) \in  \{(2\Delta,\Delta),(\Delta,2\Delta)\}\}$ with $\Delta\in (0, 1/d]$. The payoff functions take values in $\{0, (1/\Delta)^{\frac{1}{\epsilon}}\}$ such that, for every $x \in D_{(d)}$, the expected payoff is $\theta^\top_* x$. To be more specific, we have the payoff function of $x$ as
\begin{align}\label{eqn:lb.payoff}
	y(x) = \begin{cases}
	\left(\frac{1}{\Delta}\right)^{\frac{1}{\epsilon}}  &\mbox{with a probability of } \Delta^{\frac{1}{\epsilon}}\theta^\top_* x,\\
	0 & \mbox{with a probability of } 1-\Delta^{\frac{1}{\epsilon}}\theta^\top_* x.
	\end{cases}
\end{align}
We have the theorem for the lower bound of LinBET as below.
\begin{thm}[Lower Bound of LinBET]\label{thm:lb}
	 If $\theta_*$ is chosen uniformly at random from $S_d$, and the payoff for each $x \in D_{(d)}$ is in $\{0, (1/\Delta)^{\frac{1}{\epsilon}}\}$ with mean $\theta^\top_* x$, then for any algorithm $\A$ and every $T \geq \left(d/12\right)^{\frac{\epsilon}{1+\epsilon}}$, we have
	\begin{align}
		\E \left[R(\A, T ) \right]\geq \frac{d}{192}T^{\frac{1}{1+\epsilon}}.
	\end{align}
\end{thm}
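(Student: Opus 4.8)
The plan is to exploit the product structure of the construction. Both $D_{(d)}$ and the parameter family $S_d$ factor over the $d/2$ coordinate pairs, and within pair $i$ the only relevant choice at round $t$ is the weight $x_{2i-1,t}\in[0,1]$. Write $w_{i,t}$ for the weight $\A$ places on the \emph{larger} of the two coordinates of pair $i$, i.e.\ $w_{i,t}=x_{2i-1,t}$ when $(\theta_{2i-1},\theta_{2i})=(2\Delta,\Delta)$ and $w_{i,t}=1-x_{2i-1,t}$ when $(\theta_{2i-1},\theta_{2i})=(\Delta,2\Delta)$. A one-line computation from the payoff definition shows that pair $i$ contributes $\Delta(1+w_{i,t})$ to $\langle x_t,\theta_*\rangle$, whence $\langle x_t,\theta_*\rangle=\tfrac{d}{2}\Delta+\Delta\sum_{i}w_{i,t}$, the optimal value is $d\Delta$, and therefore
\begin{align}
R(\A,T)=\Delta\sum_{i=1}^{d/2}\sum_{t=1}^{T}\bigl(1-w_{i,t}\bigr).
\end{align}
It thus suffices to prove that, on average over the uniform draw of $\theta_*$ from $S_d$, every pair accumulates $\Omega(\Delta T)$ regret; this is an Assouad-type reduction to $d/2$ two-point tests.

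Fix a pair $i$ and condition on all coordinates of $\theta_*$ outside pair $i$, so that the only remaining randomness is a single bit $j_i$ choosing between $(2\Delta,\Delta)$ and $(\Delta,2\Delta)$. Let $P_0,P_1$ be the laws of the observable history $H_{t-1}=(x_1,y_1,\dots,x_{t-1},y_{t-1})$, together with $\A$'s internal coins, under $j_i=0$ and $j_i=1$ respectively. Since $x_{2i-1,t}$ is a $[0,1]$-valued function of $H_{t-1}$ and those coins, averaging over $j_i$ gives
\begin{align}
\E_{j_i}\bigl[1-w_{i,t}\bigr]=\tfrac12\Bigl(1-\bigl(\E_{P_0}[x_{2i-1,t}]-\E_{P_1}[x_{2i-1,t}]\bigr)\Bigr)\ \ge\ \tfrac12\bigl(1-\mathrm{TV}(P_0,P_1)\bigr).
\end{align}

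To control $\mathrm{TV}(P_0,P_1)$ I would use Pinsker's inequality together with the chain rule for KL divergence. The crucial point is that flipping $j_i$ changes the success probability of the Bernoulli payoff $y_s$ (given the past and the chosen arm $x_s$) from some $p$ to some $p'$ with $|p-p'|\le\Delta^{(1+\epsilon)/\epsilon}$, while the constant term $\tfrac{d}{2}$ and the nonnegative contributions of the other pairs keep $\tfrac{d}{2}\Delta^{(1+\epsilon)/\epsilon}\le p,p'\le d\,\Delta^{(1+\epsilon)/\epsilon}\le\tfrac12$. Using $\KL(\mathrm{Ber}(p)\|\mathrm{Ber}(p'))\le\chi^2(\mathrm{Ber}(p)\|\mathrm{Ber}(p'))=\frac{(p-p')^2}{p'(1-p')}$, the per-round divergence is at most $4\Delta^{(1+\epsilon)/\epsilon}/d$, so by the chain rule $\KL(P_0\|P_1)\le 4T\Delta^{(1+\epsilon)/\epsilon}/d$ and $\mathrm{TV}(P_0,P_1)\le\sqrt{2T\Delta^{(1+\epsilon)/\epsilon}/d}$, uniformly in $t\le T$. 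Summing the previous display over $t\in[T]$ and $i\in[d/2]$ and taking expectation over the remaining coordinates of $\theta_*$ then yields
\begin{align}
\E\bigl[R(\A,T)\bigr]\ \ge\ \frac{\Delta dT}{4}\left(1-\sqrt{\frac{2T\,\Delta^{(1+\epsilon)/\epsilon}}{d}}\right),
\end{align}
and it remains to choose $\Delta\asymp T^{-\epsilon/(1+\epsilon)}$ — concretely $\Delta=\tfrac{1}{12}T^{-\epsilon/(1+\epsilon)}$, which lies in the admissible range $(0,1/d]$ of the construction once $T$ is as large as assumed. Then the square-root term is at most $\tfrac{1}{12}$, the bracket is at least $\tfrac{11}{12}$, and $\E[R(\A,T)]\ge\tfrac{11}{576}\,d\,T^{1/(1+\epsilon)}\ge\tfrac{1}{192}\,d\,T^{1/(1+\epsilon)}$.

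The main obstacle I anticipate is the divergence estimate, and it is exactly where the exponent $\tfrac{1}{1+\epsilon}$ (rather than $\tfrac12$) comes from: because the heavy-tailed payoff is delivered as a rare spike — a Bernoulli whose parameter is of order $\Delta^{(1+\epsilon)/\epsilon}$, far below the per-round regret scale $\Delta$ once $\epsilon<1$ — distinguishing the two values of $j_i$ costs $\Omega\!\bigl(d\,\Delta^{-(1+\epsilon)/\epsilon}\bigr)$ rounds rather than the $\Omega(\Delta^{-2})$ of the bounded or sub-Gaussian case. One must also check that the product structure does not hurt: the payoff responds to the aggregate $\sum_i w_{i,t}$ rather than to pair $i$ alone, but this only inflates $p'$ in the denominator of $\chi^2$ and hence only helps, and conditioning on $j_{-i}$ and on the history keeps the per-round divergence for pair $i$ at the claimed $O(\Delta^{(1+\epsilon)/\epsilon}/d)$ level. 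The scale constraints $\|x\|_2\le D$, $\|\theta_*\|_2\le S$ and the requirement that all probabilities in the payoff law lie in $[0,1]$ are routine and are implied by $\Delta\le 1/d$ built into $S_d$.
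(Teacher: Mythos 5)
Your proof is correct and reaches the stated constant, but it organizes the information-theoretic core differently from the paper. The paper first treats $d=2$, introduces a \emph{null} parameter $\mu_0=(\Delta,\Delta)$ lying between the two alternatives, lower-bounds the per-pair regret by bucketing $x_j$ into $[0,\tfrac12]$ and $(\tfrac12,1]$, and controls $\KL(\PPP_0,\PPP_j)$ by a direct computation $\KL\bigl(\Delta^{\frac{1+\epsilon}{\epsilon}},\Delta^{\frac{1+\epsilon}{\epsilon}}(1+x_j)\bigr)\leq 2\Delta^{\frac{1+\epsilon}{\epsilon}}$, then sums $d/2$ copies with a ``follow the $d=2$ proof'' step. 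You instead run a clean Assouad argument directly in dimension $d$: you compare the two alternatives for pair $i$ to each other (no null), replace the bucketing by the sharper observation that $w_{i,t}\in[0,1]$ is history-measurable so its expectations under $P_0,P_1$ differ by at most $\mathrm{TV}(P_0,P_1)$, and you pay for the absence of the null by invoking $\KL\leq\chi^2$ and lower-bounding the denominator via the aggregate baseline $p'\geq\tfrac{d}{2}\Delta^{\frac{1+\epsilon}{\epsilon}}$, which buys you the $1/d$ factor in the per-round divergence that the paper gets for free from its symmetric null. Your route makes the ``other pairs only help'' point explicit, which the paper's extension to $d>2$ leaves implicit; the paper's route avoids having to control the denominator of a $\chi^2$ at all. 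Both hinge on the same mechanism (the payoff is a Bernoulli spike of rate $\Theta(\Delta^{\frac{1+\epsilon}{\epsilon}})$, so distinguishing requires $\Omega(\Delta^{-\frac{1+\epsilon}{\epsilon}})$ rounds) and the same tuning $\Delta=\tfrac{1}{12}T^{-\frac{\epsilon}{1+\epsilon}}$. One small caveat you inherit from the theorem statement itself: guaranteeing $\Delta\leq 1/d$ actually requires $T\geq(d/12)^{\frac{1+\epsilon}{\epsilon}}$, not the weaker $(d/12)^{\frac{\epsilon}{1+\epsilon}}$ printed in the theorem; this is an issue with the paper's stated hypothesis, not with your argument.
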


In the proof of Theorem~\ref{thm:lb}, we first prove the lower bound when $d=2$, and then generalize the argument to any $d>2$. We notice that the parameter in the original $d$-dimensional space is rearranged to $d/2$ tuples, each of which is a $2$-dimensional vector as $(\theta_{2i-1}, \theta_{2i}) \in  \{(2\Delta,\Delta),(\Delta,2\Delta)\}$ with $i\in [d/2]$. If the $i$-th tuple of the parameter is selected as $(2\Delta,\Delta)$, then the $i$-th tuple of the optimal arm is $(x_{*,2i-1},x_{*,2i})=(1,0)$. In this case, if we define the $i$-th tuple of the chosen arm as $(x_{t,2i-1},x_{t,2i})$, the instantaneous regret is $\Delta(1-x_{t,2i-1})$. Then, the regret can be represented as an integration of $\Delta(1-x_{t,2i-1})$ over $D_{(d)}$. Finally, with common inequalities in information theory, we obtain the regret lower bound by setting $\Delta = T^{-\frac{\epsilon}{1+\epsilon}}/12$.

We notice that martingale differences to prove the lower bound for linear stochastic bandits in~\citep{dani2008stochastic} are not directly feasible for the proof of lower bound in LinBET, because under our construction of heavy-tailed payoffs (i.e., Eq.~(\ref{eqn:lb.payoff})), the information of $\epsilon$ is excluded. Besides, our proof is partially inspired by~\cite{bubeck2010bandits}. We show the detailed proof of Theorem~\ref{thm:lb} in Appendix~\ref{apx:lb}.

\vspace{-0.1in}
\paragraph{Remark 1.} The above lower bound provides two essential hints: one is that finite variances in LinBET yield a bound of $\Omega(\sqrt{T})$, and the other is that algorithms proposed by~\cite{medina2016no} are far from optimal. The result in Theorem~\ref{thm:lb} strongly indicates that it is possible to design bandit algorithms recovering $\widetilde O(\sqrt{T})$ with finite variances.

\begin{figure}[t]
\centering
\psfrag{thet}[c][c][0.8]{$\hat\theta_{n,k^*}$}
\psfrag{m3}[c][c][0.8]{calculate $k$ LSEs with payoffs on $\{x_i\}^n_{i=1}~~~~~~~~~~~~~~~~~~~~~~~~~~~~~~~~~~~~~~~~~~~~~~~~~~~~~~~~~~~~$}
\psfrag{m4}[c][c][0.8]{take median of means of $\{\hat\theta_{n,j}\}^k_{j=1}~~~~~~~~~~~~~~~~~~~~~~~~~~~~~~~~~~~~~~~~~~~~~~~~~~~~~~~~~~~~~~~~~~~~~~~$}
\psfrag{cdot}[c][c][0.8]{$\cdots$}
\psfrag{vdot}[c][c][0.8]{$\vdots$}
\psfrag{x11}[c][c][0.8]{{$x_1$}}
\psfrag{x21}[c][c][0.8]{{$x_2$}}
\psfrag{x31}[c][c][0.8]{{$x_3$}}
\psfrag{x4}[c][c][0.8]{{$\cdots$}}
\psfrag{x5}[c][c][0.8]{{$x_n$}}
\psfrag{x6}[c][c][0.8]{{$\cdots$}}
\psfrag{x7}[c][c][0.8]{{$x_N$}}
\psfrag{k}[c][c][0.8]{{$k=\lceil 24\log\left(\frac{eT}{\delta}\right)\rceil~~~~~~~~~~~~~~~~~~~~~~~$}}
\psfrag{n}[c][c][0.8]{$N=\lfloor \frac{T}{k}\rfloor$}
\psfrag{theta1}[c][c][0.8]{$\hat\theta_{n,1}$}
\psfrag{theta2}[c][c][0.8]{$\hat\theta_{n,k}$}
\psfrag{x1}[c][c][0.8]{{$x_1$}}
\psfrag{x2}[c][c][0.8]{{$x_n$}}
\psfrag{x3}[c][c][0.8]{{$x_N$}}
\psfrag{N}[c][c][0.8]{{$N=T^{\frac{2\epsilon}{1+3\epsilon}}$}}
\psfrag{KKKK}[c][c][0.8]{{~~~~~~~~$k=T^{\frac{1+\epsilon}{1+3\epsilon}}$}}
\psfrag{m2}[c][c][0.8]{~~~~~~~~~~~~~~~~~~~~~~~~~~~~~~~~~~~~~~calculate LSE with $\{\tilde l_i\}^n_{i=1}$}
\psfrag{m1}[c][c][0.8]{~~~~~~~~~~~~~~~~~~~~~~~~~~~~~~~~~~~~~~~~~~~of payoffs on $\{x_i\}^k, i\in[n]$}
\psfrag{m11}[c][c][0.8]{~~~~~~~~~~~~~~~~~~~~~~take median of means}
\psfrag{theta}[c][c][0.8]{$\hat\theta_{n}$}
\psfrag{l1}[c][c][0.8]{$\tilde l_1$}
\psfrag{l2}[c][c][0.8]{$\tilde l_n$}
\subfigure[Framework of MENU]{\includegraphics[width=0.35\columnwidth]{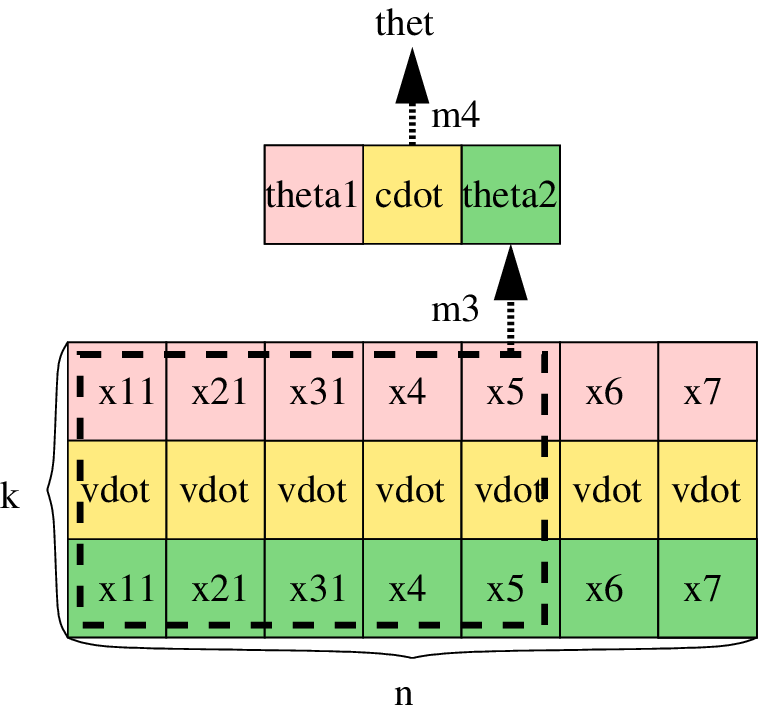}}
\subfigure[Framework of MoM]{\includegraphics[width=0.35\columnwidth]{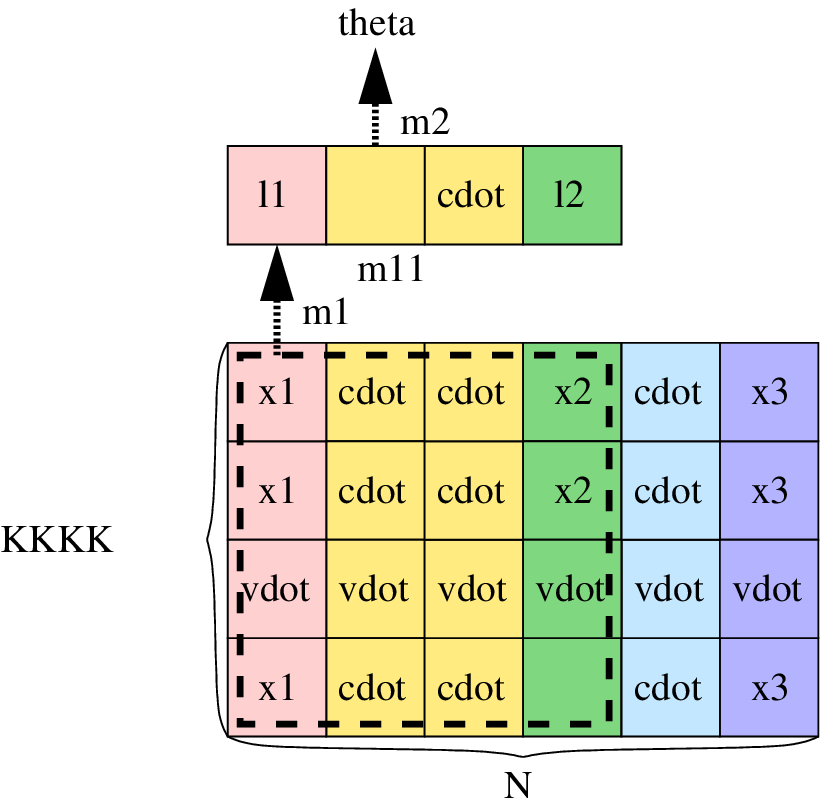}}
\caption{Framework comparison between our MENU and MoM by~\cite{medina2016no}.}\label{fig:algo.com}
\end{figure}

\section{Algorithms and Upper Bounds}\label{sec:algo}
In this section, we develop two novel bandit algorithms to solve LinBET, which turns out to be almost optimal. We rigorously prove regret upper bounds for the proposed algorithms. In particular, our core idea is based on the {\underline o}ptimism in the {\underline f}ace of {\underline u}ncertainty principle (OFU). The first algorithm is \underline{me}dian of mea{\underline n}s under OF{\underline U} (MENU) shown in Algorithm~\ref{alg:menu}, and the second algorithm is \underline{t}runcation under \underline{OFU} (TOFU) shown in Algorithm~\ref{alg:tofu}. For comparisons, we directly name the bandit algorithm based on median of means in~\cite{medina2016no} as MoM, and name the bandit algorithm based on confidence region with truncation in~\cite{medina2016no} as CRT.

Both algorithms in this paper adopt the tool of ridge regression. At time step $t$, let $\hat{\theta}_t$ be the $\ell^2$-regularized least-squares estimate (LSE) of $\theta_*$ as $
\hat{\theta}_t = V_t^{-1}X_t^\top Y_t$, where $X_t\in \R^{t\times d}$ is a matrix of which rows are $x_1^\top, \cdots, x_t^\top$, $V_t \triangleq X_t^\top X_t + \lambda I_d$, $Y_t \triangleq (y_1,\cdots,y_t)$ is a vector of the historical observed payoffs until time $t$ and $\lambda > 0$ is a regularization parameter.

\begin{algorithm}[t] \caption{\underline{Me}dian of mea\underline{n}s under OF{\underline U} (MENU)}\label{alg:menu}
{\small
{\begin{algorithmic}[1]
   \STATE {\bfseries input} $d$, $c$, $\epsilon$, $\delta$, $\lambda$, $S$, $T$, $\{D_n\}^N_{n=1}$
   \STATE {\bfseries initialization:} $k = \lceil 24\log\left(\frac{eT}{\delta}\right)\rceil$, $N = \lfloor \frac{T}{k}\rfloor$, $V_0 = \lambda I_d$, $C_0 = \B({\bf 0},S)$
   \FOR {$n=1,2,\cdots,N$}
   \STATE $( x_n,\tilde{\theta}_n)=\arg\max_{(x,\theta)\in D_n\times C_{n-1} }\langle x, \theta\rangle$ \COMMENT{to select an arm}
   \STATE Play $x_n$ with $k$ times and observe payoffs $y_{n,1},y_{n,2},\cdots,y_{n,k}$
   \STATE $V_n = V_{n-1} + x_nx_n^\top$
   \STATE For $j\in [k]$,  $\hat\theta_{n,j} = V^{-1}_n\sum_{i=1}^{n}y_{i,j}x_i$\COMMENT{to calculate LSE for the $j$-th group}
   \STATE For $j\in [k]$, let $r_j$ be the median of $\{\|\hat\theta_{n,j}-\hat\theta_{n,s}\|_{V_n}: s\in[k]\backslash j\}$
   \STATE $k^*= \arg\min_{j\in [k]} r_j$\COMMENT{to take median of means of estimates}
   \STATE $\beta_n = 3\left(\left(9dc\right)^{\frac{1}{1+\epsilon}}n^{\frac{1-\epsilon}{2(1+\epsilon)}}+\lambda^\frac{1}{2} S\right)$
    \STATE $C_{n} = \{\theta: \|\theta-\hat\theta_{n,k^*}\|_{V_{n}}\leq \beta_{n}\}$\COMMENT{to update the confidence region}
   \ENDFOR
\end{algorithmic}}}
\end{algorithm}

\subsection{MENU and Regret}
\paragraph{Description of MENU.} To conduct median of means in LinBET, it is common to allocate $T$ pulls of bandits among $N\leq T$ epochs, and for each epoch the same arm is played multiple times to obtain an estimate of $\theta_*$. We find that there exist different ways to contruct the epochs. We design the framework of MENU in Figure~\ref{fig:algo.com}(a), and show the framework of MoM designed by~\cite{medina2016no} in Figure~\ref{fig:algo.com}(b). For MENU and MoM, we have the following three differences. First, for each epoch $n=1,\cdots,N$, MENU plays the same arm $x_n$ by $O(\log (T))$ times, while MoM plays the same arm by $O(T^{\frac{1+\epsilon}{1+3\epsilon}})$ times. Second, at epoch $n$ with historical payoffs, MENU conducts LSEs by $O(\log (T))$ times, each of which is based on $\{x_i\}^n_{i=1}$, while MoM conducts LSE by one time based on intermediate payoffs calculated via median of means of observed payoffs. Third, MENU adopts median of means of LSEs, while MoM adopts median of means of the observed payoffs. Intuitively, the execution of multiple LSEs will lead to the improved regret of MENU. With a better trade-off between $k$ and $N$ in Figure~\ref{fig:algo.com}(a), we derive an improved upper bound of regret in Theorem~\ref{thm:menureg}.

In light of Figure~\ref{fig:algo.com}(a), we develop algorithmic procedures in Algorithm~\ref{alg:menu} for MENU. We notice that, in order to guarantee the median of means of LSEs not far away from the true underlying parameter with high probability, we construct the confidence interval in Line~10 of Algorithm~\ref{alg:menu}. Now we have the following theorem for the regret upper bound of MENU.

\begin{thm}[Regret Analysis for the MENU Algorithm]\label{thm:menureg}
	Assume that for all $t$ and $x_t \in D_t$ with $\|x_t\|_2\leq D$, $\|\theta_*\|_2\leq S$, $| x_t^\top\theta_*| \leq L$ and $\E[|\eta_t|^{1+\epsilon}|\F_{t-1}] \leq c$. Then, with probability at least $1-\delta$, for every $T \geq 256 + 24\log\left(e/\delta\right)$, the regret of the MENU algorithm satisfies
	{\small
	\begin{align*}
	R(\text{MENU},T) \leq 6\left(\left(9dc\right)^{\frac{1}{1+\epsilon}}+\lambda^{\frac{1}{2}}S+L\right)T^{\frac{1}{1+\epsilon}}\left(24\log\left(\frac{eT}{\delta}\right)+1\right)^{\frac{\epsilon}{1+\epsilon}}\sqrt{2d\log\left( 1+\frac{TD^2}{\lambda d} \right)}.
	\end{align*}}
\end{thm}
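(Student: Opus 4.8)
The plan is to follow the standard OFU-style regret decomposition for linear bandits, with the twist that the confidence region must be justified via a median-of-means concentration argument tailored to heavy-tailed noise. First I would establish the key concentration lemma: that with probability at least $1-\delta$, for \emph{every} epoch $n\in[N]$ we have $\theta_*\in C_n$, i.e. $\|\hat\theta_{n,k^*}-\theta_*\|_{V_n}\le \beta_n$. The mechanism is: for each fixed group $j\in[k]$, the estimator $\hat\theta_{n,j}=V_n^{-1}\sum_{i\le n}y_{i,j}x_i$ satisfies $\hat\theta_{n,j}-\theta_* = V_n^{-1}\sum_i \eta_{i,j}x_i - \lambda V_n^{-1}\theta_*$, so $\|\hat\theta_{n,j}-\theta_*\|_{V_n}\le \|\sum_i\eta_{i,j}x_i\|_{V_n^{-1}} + \lambda^{1/2}S$. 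The cross term $\|\sum_i\eta_{i,j}x_i\|_{V_n^{-1}}^2 = \sum_i\sum_{i'}\eta_{i,j}\eta_{i',j}x_i^\top V_n^{-1}x_{i'}$ has expectation (conditioning appropriately, using $\E[\eta_{i,j}\eta_{i',j}]=0$ for $i\ne i'$ by the martingale/independence structure across repeated pulls, and $\E[\eta_{i,j}^2]$ possibly infinite — here I must instead bound $\E[|\eta_{i,j}|^{1+\epsilon}]$ and use a moment inequality rather than a second-moment one). The right statement is a bound of the form $\E\big[\|\sum_i\eta_{i,j}x_i\|_{V_n^{-1}}^{1+\epsilon}\big] \lesssim (dc)\, n^{(1-\epsilon)/2}$ — roughly, each coordinate contributes $\E|\eta|^{1+\epsilon}\le c$ and there are effectively $d$ directions, with the $n^{(1-\epsilon)/2}$ coming from $(\sum_i \|x_i\|_{V_n^{-1}}^2)^{(1+\epsilon)/2}\le (\text{tr})^{(1+\epsilon)/2}$ type estimates combined with a von Bahr–Esseen / Bahr-type inequality for the $(1+\epsilon)$-th moment of a sum of martingale differences. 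Then Markov's inequality gives $\Pr(\|\hat\theta_{n,j}-\theta_*\|_{V_n} > (9dc)^{1/(1+\epsilon)}n^{(1-\epsilon)/(2(1+\epsilon))}+\lambda^{1/2}S)\le 1/9$ (the constant $9$ and the $3\times$ in $\beta_n$ are chosen so that the bad probability is at most $1/9$, hence strictly below $1/2$ with room to spare).

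Next I would run the median-of-means boosting step. Call group $j$ ``good'' if $\|\hat\theta_{n,j}-\theta_*\|_{V_n}\le \beta_n/3$. Since each group is good with probability $\ge 8/9$ independently (the $k$ repeated-pull sequences are independent across $j$), a Hoeffding/Chernoff bound shows that with probability $\ge 1-e^{-k/24}$ (hence $\ge 1-\delta/(eT)$ by the choice $k=\lceil 24\log(eT/\delta)\rceil$) strictly more than half the groups are good at epoch $n$. On that event, the chosen index $k^*$ (minimizing the median pairwise $V_n$-distance) must satisfy $\|\hat\theta_{n,k^*}-\theta_*\|_{V_n}\le\beta_n$: a standard argument — if $k^*$ is good we are within $\beta_n/3$; and in general, the median of the distances from $\hat\theta_{n,k^*}$ to the other estimators is at most $2\beta_n/3$ (because more than half the estimators are good and mutually within $2\beta_n/3$), so there is a good estimator within $2\beta_n/3$ of $\hat\theta_{n,k^*}$, and hence $\hat\theta_{n,k^*}$ is within $2\beta_n/3+\beta_n/3=\beta_n$ of $\theta_*$. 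A union bound over $n\in[N]\le T$ epochs gives the all-epochs event of probability $\ge 1-\delta$.

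On that event, the OFU analysis proceeds as in Abbasi-Yadkori et al. Because $(x_n,\tilde\theta_n)$ maximizes $\langle x,\theta\rangle$ over $D_n\times C_{n-1}$ and $\theta_*\in C_{n-1}$, we get optimism $\langle x_n,\tilde\theta_n\rangle\ge\langle x_n^*,\theta_*\rangle$, so the per-epoch regret is bounded by $\langle x_n,\tilde\theta_n-\theta_*\rangle\le \|x_n\|_{V_{n-1}^{-1}}\|\tilde\theta_n-\theta_*\|_{V_{n-1}}\le \|x_n\|_{V_{n-1}^{-1}}\cdot 2\beta_{n-1}$ (using $\tilde\theta_n,\theta_*\in C_{n-1}$ and the triangle inequality), and each of the $k$ pulls in epoch $n$ incurs this same instantaneous regret, plus a trivial $2L$ bound when $\|x_n\|_{V_{n-1}^{-1}}>1$. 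Summing over epochs and pulls, $R(\text{MENU},T)\le k\sum_{n=1}^N \min\{1,\|x_n\|_{V_{n-1}^{-1}}\}\cdot 2\beta_{N} + (\text{lower order})$, then applying Cauchy–Schwarz and the elliptical-potential (log-determinant) lemma $\sum_{n=1}^N\min\{1,\|x_n\|_{V_{n-1}^{-1}}^2\}\le 2\log(\det V_N/\det(\lambda I))\le 2d\log(1+ND^2/(\lambda d))$, together with $Nk\le T$ so that $N\le T$, $k\le 24\log(eT/\delta)+1$, and $\beta_N\lesssim ((9dc)^{1/(1+\epsilon)}N^{(1-\epsilon)/(2(1+\epsilon))}+\lambda^{1/2}S)$ with $N^{1/2}\cdot N^{(1-\epsilon)/(2(1+\epsilon))} = N^{1/(1+\epsilon)}\le (T/k)^{1/(1+\epsilon)}$. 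Collecting the factors — the $k$ from repeated pulls combines with the $k^{-1/(1+\epsilon)}$ from $N^{1/(1+\epsilon)}$ to give the $k^{\epsilon/(1+\epsilon)}=(24\log(eT/\delta)+1)^{\epsilon/(1+\epsilon)}$ term, and $T^{1/(1+\epsilon)}$ and the $\sqrt{2d\log(1+TD^2/(\lambda d))}$ fall out — yields exactly the stated bound; the condition $T\ge 256+24\log(e/\delta)$ is what makes $N=\lfloor T/k\rfloor\ge 1$ and the floor/polylog manipulations clean.

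The main obstacle is the heavy-tailed concentration in the first step: ordinarily one bounds $\E\|\sum_i\eta_i x_i\|_{V^{-1}}^2$ by a variance argument, but with only a $(1+\epsilon)$-th moment available ($\epsilon<1$) there is no variance, so I need a $(1+\epsilon)$-th moment inequality for the (martingale) sum — the von Bahr–Esseen inequality $\E|\sum_i Z_i|^{1+\epsilon}\le 2\sum_i\E|Z_i|^{1+\epsilon}$ for martingale differences with $1+\epsilon\in(1,2]$ — applied coordinate-by-coordinate in a suitable basis (e.g. diagonalizing $V_n$ or bounding $\|v\|_{V_n^{-1}}\le\|v\|_2/\sqrt\lambda$ then via $\ell^1$–$\ell^2$), and I must track how the geometry via $\sum_i\|x_i\|_{V_n^{-1}}^2\le d$ (elliptical potential, but for a fixed arm sequence, so actually $\le \min(d, n D^2/\lambda)$) and the dimension $d$ interact to produce the precise $(9dc)^{1/(1+\epsilon)}n^{(1-\epsilon)/(2(1+\epsilon))}$ form of $\beta_n$ with the right constant; getting the constant $9$ (and hence the $1/9$ failure probability and the factor-of-$3$ amplification) exactly right is the delicate bookkeeping.
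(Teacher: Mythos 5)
Your proposal is correct and follows the same architecture as the paper's proof: a per-group confidence bound for the ridge LSE, a median-of-means boosting lemma showing $\|\hat\theta_{n,k^*}-\theta_*\|_{V_n}\le 3\gamma$ with probability $1-e^{-k/24}$, a union bound over the $N\le T$ epochs, and then the standard OFU decomposition $r_n\le 2(\beta_{n-1}+L)\min\{\|x_n\|_{V_{n-1}^{-1}},1\}$ with Cauchy--Schwarz and the elliptical-potential lemma, assembling $kN^{1/2}N^{\frac{1-\epsilon}{2(1+\epsilon)}}=kN^{\frac{1}{1+\epsilon}}\le k^{\frac{\epsilon}{1+\epsilon}}T^{\frac{1}{1+\epsilon}}$ exactly as in Appendix B. The one substantive difference is inside the per-group lemma. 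The paper bounds $\Pr\bigl(\sum_{i=1}^d(\sum_\tau u_{i,\tau}\eta_\tau)^2>\gamma^2\bigr)$ by a truncation split (the event that some $|u_{i,\tau}\eta_\tau|>\gamma$, plus the truncated sum exceeding $\gamma^2$), handling each piece with union bounds and Markov; you instead invoke a von Bahr--Esseen $(1+\epsilon)$-moment inequality for the martingale sum in each coordinate and then apply Markov once, using $\sum_i|a_i|^{1+\epsilon}\ge(\sum_i a_i^2)^{\frac{1+\epsilon}{2}}$. Both routes hinge on the same geometric fact that the rows $u_i$ of $V_n^{-1/2}X_n^\top$ satisfy $\|u_i\|_2\le 1$ and hence $\|u_i\|_{1+\epsilon}^{1+\epsilon}\le n^{\frac{1-\epsilon}{2}}$, and both yield a failure probability strictly below $1/4$ with $\gamma^{1+\epsilon}=\Theta(dc\,n^{\frac{1-\epsilon}{2}})$; your route is arguably cleaner (one Markov application, slightly better constants) at the cost of importing the martingale von Bahr--Esseen inequality, while the paper's truncation argument is self-contained and elementary. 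One small caution: you assert the $k$ groups are independent, but the arm sequence is adapted to all groups' past payoffs, so the indicators of ``good'' groups are only a bounded-difference martingale; the Azuma--Hoeffding form of the Chernoff step (as in the paper's Lemma 3) is the right tool there, and your argument goes through unchanged with it.
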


The technical challenges in MENU (i.e., Algorithm~~\ref{alg:menu}) and its proofs are discussed as follows. Based on the common techniques in linear stochastic bandits~\citep{abbasi2011improved}, to guarantee the instantaneous regret in LinBET, we need to guarantee $\|\theta_*-\hat\theta_{n,k^*}\|_{V_{n}}\leq \beta_{n}$ with high probability. We attack this issue by guaranteeing $\|\theta_*-\hat\theta_{n,j}\|_{V_{n}}\leq \beta_n/3$ with a probability of $3/4$, which could reduce to a problem of bounding a weighted sum of historical noises. Interestingly, by conducting singular value decomposition on $X_n$ (of which rows are $x_1^\top, \cdots, x_n^\top$), we find that $2$-norm of the weights is no greater than $1$. Then the weighted sum can be bounded by a term as $O\left(n^{\frac{1-\epsilon}{2(1+\epsilon)}}\right)$.
With a standard analysis in linear stochastic bandits from the instantaneous regret to the regret, we achieve the above results for MENU. We show the detailed proof of Theorem~\ref{thm:menureg} in Appendix~\ref{apx:menu}.

\vspace{-0.1in}
\paragraph{Remark 2.} For MENU, we adopt the assumption of heavy-tailed payoffs on central moments, which is required in the basic technique of median of means~\citep{bubeck2013bandits}. Besides, there exists an implicit mild assumption in Algorithm~\ref{alg:menu} that, at each epoch $n$, the decision set must contain the selected arm $x_n$ at least $k$ times, which is practical in applications, e.g., online personalized recommendations~\citep{li2010contextual}. The condition of $T \geq 256 + 24\log\left(e/\delta\right)$ is required for $T\geq k$. The regret upper bound of MENU is $\widetilde O(T^{\frac{1}{1+\epsilon}})$, which implies that finite variances in LinBET are sufficient to achieve $\widetilde O(\sqrt{T})$.

\begin{algorithm}[!t] \caption{\underline{T}runcation under \underline{OFU} (TOFU)}\label{alg:tofu}
{\small
{\begin{algorithmic}[1]
   \STATE {\bfseries input} $d$, $b$, $\epsilon$, $\delta$, $\lambda$, $T$, $\{D_t\}^T_{t=1}$
   \STATE {\bfseries initialization:} $V_0 = \lambda I_d$, $C_0 = \B({\bf 0},S)$   
   \FOR {$t=1,2,\cdots,T$}
     \STATE $b_t = \left(\frac{b}{\log\left(\frac{2T}{\delta}\right)}\right)^{\frac{1}{\epsilon}}t^{\frac{1-\epsilon}{2(1+\epsilon)}}$
   \STATE $( x_t,\tilde{\theta}_t)=\arg\max_{(x,\theta)\in D_t\times C_{t-1} }\langle x, \theta\rangle$ \COMMENT{to select an arm}
   \STATE Play $x_t$ and observe a payoff $y_t$
   \STATE $V_t = V_{t-1} + x_tx_t^\top$ and $X^\top_t = [x_1,\cdots,x_t]$
   \STATE $\left[u_1, \cdots,u_d\right]^\top = V^{-1/2}_tX^\top_t $  
   \FOR{$i = 1, \cdots, d$}
   \STATE $Y_i^\dagger = (y_1\mathds{1}_{u_{i,1}y_1\leq b_t},\cdots, y_t\mathds{1}_{u_{i,t}y_t\leq b_t})$\COMMENT{to truncate the payoffs}
   \ENDFOR
   \STATE $\theta_t^\dagger = V_t^{-1/2}(u_1^\top Y_1^\dagger ,\cdots,u_d^\top Y_d^\dagger )$
  \STATE $\beta_t =4 \sqrt{d}b^{\frac{1}{1+\epsilon}}\left(\log\left(\frac{2dT}{\delta}\right)\right)^{\frac{\epsilon}{1+\epsilon}}t^{\frac{1-\epsilon}{2(1+\epsilon)}}+\lambda^\frac{1}{2} S$
   \STATE Update $C_t = \{\theta: \|\theta-\theta_t^\dagger\|_{V_t}\leq \beta_t\}$\COMMENT{to update the confidence region}
   \ENDFOR
\end{algorithmic}}}
\end{algorithm}

\subsection{TOFU and Regret}
\paragraph{Description of TOFU.}  We demonstrate the algorithmic procedures of TOFU in Algorithm~\ref{alg:tofu}. We point out two subtle differences between our TOFU and the algorithm of CRT as follows. In TOFU, to obtain the accurate estimate of $\theta_*$, we need to trim all historical payoffs for each dimension individually. Besides, the truncating operations depend on the historical information of arms. By contrast, in CRT, the historical payoffs are trimmed once, which is controlled only by the number of rounds for playing bandits. Compared to CRT, our TOFU achieves a tighter confidence interval, which can be found from the setting of $\beta_t$. Now we have the following theorem for the regret upper bound of TOFU.

\begin{thm}[Regret Analysis for the TOFU Algorithm]\label{thm:tofureg}
Assume that for all $t$ and $x_t \in D_t$ with $\|x_t\|_2\leq D$, $\|\theta_*\|_2\leq S$, $| x_t^\top\theta_*| \leq L$ and $\E[|y_t|^{1+\epsilon}|\F_{t-1}] \leq b$. Then, with probability at least $1-\delta$, for every $T\geq 1$, the regret of the TOFU algorithm satisfies
	\begin{align*}
	R(\tofu, T)\leq 2T^{\frac{1}{1+\epsilon}}\left(4 \sqrt{d}b^{\frac{1}{1+\epsilon}}\left(\log\left(\frac{2dT}{\delta}\right)\right)^{\frac{\epsilon}{1+\epsilon}}+\lambda^\frac{1}{2} S+L\right)\sqrt{2d\log\left( 1+\frac{TD^2}{\lambda d} \right)}.
	\end{align*}
\end{thm}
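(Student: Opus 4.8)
The plan is to run the standard optimism-in-the-face-of-uncertainty (OFU) argument for linear bandits and to isolate the one nonstandard ingredient: that the confidence regions are valid, i.e.\ $\theta_*\in C_t$ for all $t\le T$ with probability at least $1-\delta$, equivalently $\|\theta_*-\theta_t^\dagger\|_{V_t}\le\beta_t$ for every $t$. Granting this, the regret bound is routine bookkeeping. On the good event, since $(x_t,\tilde\theta_t)$ maximizes $\langle x,\theta\rangle$ over $D_t\times C_{t-1}$ and $\theta_*\in C_{t-1}$, we have $\langle x_t,\tilde\theta_t\rangle\ge\langle x_t^*,\theta_*\rangle$, so the instantaneous regret satisfies $\langle x_t^*-x_t,\theta_*\rangle\le\langle x_t,\tilde\theta_t-\theta_*\rangle\le\|x_t\|_{V_{t-1}^{-1}}\|\tilde\theta_t-\theta_*\|_{V_{t-1}}\le2\beta_{t-1}\|x_t\|_{V_{t-1}^{-1}}$, and it is also trivially at most $2L$. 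Combining $\min\{2\beta_{t-1}\|x_t\|_{V_{t-1}^{-1}},2L\}\le2(\beta_{t-1}+L)\min\{\|x_t\|_{V_{t-1}^{-1}},1\}$, the monotonicity $\beta_{t-1}\le\beta_T$, Cauchy--Schwarz in $t$, and the elliptical-potential lemma $\sum_{t\le T}\min\{\|x_t\|_{V_{t-1}^{-1}}^2,1\}\le2d\log(1+TD^2/(\lambda d))$ \citep{abbasi2011improved} gives $R(\tofu,T)\le2(\beta_T+L)\sqrt{2dT\log(1+TD^2/(\lambda d))}$. Substituting $\beta_T=4\sqrt d\,b^{1/(1+\epsilon)}(\log(2dT/\delta))^{\epsilon/(1+\epsilon)}T^{(1-\epsilon)/(2(1+\epsilon))}+\lambda^{1/2}S$, using $\sqrt T\cdot T^{(1-\epsilon)/(2(1+\epsilon))}=T^{1/(1+\epsilon)}$ and (since $\epsilon\le1$) $\sqrt T\le T^{1/(1+\epsilon)}$ to fold the $\lambda^{1/2}S+L$ part into the same prefactor, yields exactly the stated bound.

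So the real work is the confidence-region bound. Write $\mu:=X_t\theta_*$ for the vector of conditional means and let $u_i^\top$ be the $i$-th row of $V_t^{-1/2}X_t^\top$, so $u_i\in\R^t$; using $V_t^{1/2}\theta_*=V_t^{-1/2}(X_t^\top\mu+\lambda\theta_*)$ one gets the identity $V_t^{1/2}(\theta_t^\dagger-\theta_*)=\big(u_i^\top(Y_i^\dagger-\mu)\big)_{i=1}^d-\lambda V_t^{-1/2}\theta_*$, hence $\|\theta_t^\dagger-\theta_*\|_{V_t}\le\sqrt{\sum_{i=1}^d(u_i^\top(Y_i^\dagger-\mu))^2}+\lambda\|V_t^{-1/2}\theta_*\|_2$, and the second term is at most $\lambda^{1/2}S$, which is the ridge part of $\beta_t$. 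It then suffices to show, for each fixed $t$ and each $i\in[d]$, that $|u_i^\top(Y_i^\dagger-\mu)|\le 4b^{1/(1+\epsilon)}(\log(2dT/\delta))^{\epsilon/(1+\epsilon)}t^{(1-\epsilon)/(2(1+\epsilon))}$ with probability at least $1-\delta/(dT)$, and then union-bound over $i\in[d]$ and $t\in[T]$ (summing the $d$ squared bounds produces the $\sqrt d$ factor). The key structural input---the SVD observation used for MENU---is that the weights are uniformly small: $\|u_i\|_2^2=(I_d-\lambda V_t^{-1})_{ii}\le1$ and $\sum_{i}u_{i,s}^2=\|x_s\|_{V_t^{-1}}^2\le1$, so $|u_{i,s}|\le1$ and $\sum_s u_{i,s}^2\le1$. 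I would then split $u_i^\top(Y_i^\dagger-\mu)$ into a truncation-bias part, bounded via the $(1+\epsilon)$-moment assumption and Markov by $b\,b_t^{-\epsilon}\sum_s|u_{i,s}|^{1+\epsilon}\le b\,b_t^{-\epsilon}t^{(1-\epsilon)/2}$ (Hölder, using $\sum_s u_{i,s}^2\le1$), and a centered fluctuation part controlled by a Bernstein/Freedman-type inequality with increments of size $\bigO(b_t)$ and total conditional variance $\bigO(b\,b_t^{1-\epsilon}\sum_s u_{i,s}^2)=\bigO(b\,b_t^{1-\epsilon})$. The threshold $b_t=(b/\log(2T/\delta))^{1/\epsilon}t^{(1-\epsilon)/(2(1+\epsilon))}$ is precisely the one that balances these two contributions at order $t^{(1-\epsilon)/(2(1+\epsilon))}$, up to the $b$ and logarithmic powers that appear in $\beta_t$.

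The step I expect to be the main obstacle is making this last concentration rigorous. The weights $u_{i,s}$ are determined by $V_t$, hence by the \emph{future} arms $x_{s+1},\dots,x_t$, so $\sum_s\big(u_{i,s}y_s\mathds{1}_{u_{i,s}y_s\le b_t}-\E[\,\cdot\mid\F_{s-1}]\big)$ is not a martingale along the natural filtration $\{\F_s\}$, and a plain Freedman bound does not apply verbatim; heavy tails also rule out a crude net over all admissible weight vectors $\{w:\|w\|_2\le1,\|w\|_\infty\le1\}$. The proof will have to get around this---for instance by a self-normalized / method-of-mixtures style concentration adapted to truncated heavy-tailed sums, or by replacing $u_{i,s}$ with an $\F_{s-1}$-measurable surrogate and controlling the mismatch---while keeping all exponents tight enough that the specific $\beta_t$ of Algorithm~\ref{alg:tofu} (and not a larger quantity) is enough. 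Once that concentration is in hand, the rest is the bookkeeping outlined above.
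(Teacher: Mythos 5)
Your proposal follows essentially the same route as the paper's proof: the same confidence--ellipsoid decomposition $\lVert\theta_t^\dagger-\theta_*\rVert_{V_t}\le\sqrt{\sum_{i=1}^d(u_i^\top(Y_i^\dagger-X_t\theta_*))^2}+\lambda^{1/2}S$, the same split of each coordinate into a truncation--bias term (bounded via the $(1+\epsilon)$-moment assumption and $\lVert u_i\rVert_{1+\epsilon}\le t^{\frac{1-\epsilon}{2(1+\epsilon)}}$) plus a centered fluctuation term (bounded by Bernstein's inequality for martingales), the same union bound over $i\in[d]$ and $t\in[T]$, and the same OFU bookkeeping with the elliptical--potential lemma. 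The one step you single out as the ``main obstacle'' is precisely the step the paper does \emph{not} treat with the care you are asking for: the paper directly applies Bernstein's inequality for martingales to $\sum_{\tau=1}^t u_{i,\tau}\bigl(y_\tau^\dagger-\E[y_\tau^\dagger\mid\F_{\tau-1}]\bigr)$ with conditional--variance proxy $\frac{1}{2b_t}\sum_\tau\E[u_{i,\tau}^2(y_\tau^\dagger-\E[y_\tau^\dagger\mid\F_{\tau-1}])^2\mid\F_{\tau-1}]$, i.e.\ it treats the weights $u_{i,\tau}$ as if they were $\F_{\tau-1}$-measurable, whereas they are rows of $V_t^{-1/2}X_t^\top$ and hence depend on arms chosen after round $\tau$. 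Your measurability objection is therefore valid against the paper's own argument; no self-normalized, mixture, or surrogate-weight device appears there to repair it. So your write-up reproduces the paper's proof up to and including its most delicate point, and explicitly declines to close the gap that the paper closes only by fiat.

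Two smaller remarks. First, the threshold you quote from Algorithm~\ref{alg:tofu}, $b_t=(b/\log(2T/\delta))^{1/\epsilon}t^{\frac{1-\epsilon}{2(1+\epsilon)}}$, is not the value that balances the bias and fluctuation terms; the appendix actually uses $b_t=(b/\log(2d/\delta))^{\frac{1}{1+\epsilon}}t^{\frac{1-\epsilon}{2(1+\epsilon)}}$, which is the balancing choice, so the exponent $1/\epsilon$ in the algorithm box is an inconsistency you inherited rather than introduced. Second, your explicit per-round union bound (working at level $\delta/(dT)$ so that $\log(2d/\delta)$ becomes $\log(2dT/\delta)$) is the right way to make the theorem's uniform-in-$t$ statement rigorous; the paper states its confidence lemma for a single fixed $t$ and leaves that union bound implicit.
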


Similarly to the proof in Theorem~\ref{thm:menureg}, we can achieve the above results for TOFU. Due to space limitation, we show the detailed proof of Theorem~\ref{thm:tofureg} in Appendix~\ref{apx:tofu}.
\vspace{-0.1in}
\paragraph{Remark 3.} For TOFU, we adopt the assumption of heavy-tailed payoffs on raw moments. It is worth pointing out that, when $\epsilon = 1$, we have regret upper bound for TOFU as $\widetilde O(d\sqrt{T})$, which implies that we recover the same order of $d$ as that under sub-Gaussian assumption~\citep{abbasi2011improved}. A weakness in TOFU is high time complexity, because for each round TOFU needs to truncate all historical payoffs. The time complexity might be reasonably reduced by dividing $T$ into multiple epochs, each of which contains only one truncation.

\begin{table}[t]
\renewcommand{\arraystretch}{1.0}
\caption{Statistics of synthetic datasets in experiments. For Student's $t$-distribution, $\nu$ denotes the degree of freedom, $l_p$ denotes the location, $s_p$ denotes the scale. For Pareto distribution, $\alpha$ denotes the shape and $s_m$ denotes the scale. NA denotes not available.}\label{table:syn.stat}
\centering
\begin{small}
\begin{tabular}{p{0.8cm}<{\centering}|p{2.8cm}<{\centering}|p{3.5cm}<{\centering}|p{2.4cm}<{\centering}|p{2.0cm}<{\centering}}
\toprule
dataset &  $D_t$ \{$\#$arms,$\#$dimensions\}& distribution \{parameters\} & $\{\epsilon, b, c\}$ & mean of the optimal arm\\\cmidrule{1-5}
S1 &  \{20,10\} & Student's $t$-distribution \{$\nu=3,l_p=0,s_p=1$\} & \{1.00, NA, 3.00\} & $4.00$\\\cmidrule{1-5}
S2 &  \{100,20\} & Student's $t$-distribution \{$\nu=3,l_p=0,s_p=1$\} & \{1.00, NA, 3.00\}& $7.40$\\\cmidrule{1-5}
S3 &  \{20,10\} &  Pareto distribution \{$\alpha=2,s_m = \frac{x_t^\top\theta_*}{2}$\} & \{0.50, 7.72, NA\}& $3.10$\\\cmidrule{1-5}
S4 &  \{100,20\} & Pareto distribution \{$\alpha=2,s_m = \frac{x_t^\top\theta_*}{2}$\} & \{0.50, 54.37, NA\}& $11.39$\\
\bottomrule
\end{tabular}
\end{small}
\end{table}

\section{Experiments}
In this section, we conduct experiments based on synthetic datasets to evaluate the performance of our proposed bandit algorithms: MENU and TOFU. For comparisons, we adopt two baselines: MoM and CRT proposed by~\cite{medina2016no}. We run multiple independent repetitions for each dataset in a personal computer under Windows 7 with Intel CPU$@$3.70GHz and 16GB memory.

\subsection{Datasets and Setting}
To show effectiveness of bandit algorithms, we will demonstrate cumulative payoffs with respect to number of rounds for playing bandits over a fixed finite-arm decision set. For verifications, we adopt four synthetic datasets (named as S1--S4) in the experiments, of which statistics are shown in Table~\ref{table:syn.stat}. The experiments on heavy tails require $\epsilon,b$ or $\epsilon,c$ to be known, which corresponds to the assumptions of Theorem~\ref{thm:menureg} or Theorem~\ref{thm:tofureg}. According to the required information, we can apply MENU or TOFU into practical applications. We adopt Student's $t$ and Pareto distributions because they are common in practice. For Student's $t$-distributions, we easily estimate $c$, while for Pareto distributions, we easily estimate $b$. Besides, we can choose different parameters (e.g., larger values) in the distributions, and recalculate the parameters of $b$ and $c$.

For S1 and S2, which contain different numbers of arms and different dimensions for the contextual information, we adopt standard Student's $t$-distribution to generate heavy-tailed noises. For the chosen arm $x_t\in D_t$, the expected payoff is $x_t^\top\theta_*$, and the observed payoff is added a noise generated from a standard Student's $t$-distribution. We generate each dimension of contextual information for an arm, as well as the underlying parameter, from a uniform distribution over $[0,1]$. The standard Student's $t$-distribution implies that the bound for the second central moment of S1 and S2 is  $3$.

For S3 and S4, we adopt Pareto distribution, where the shape parameter is set as $\alpha = 2$. We know $x_t^\top\theta_*=\alpha s_m/(\alpha-1)$ implying $s_m= x_t^\top\theta_*/2$. Then, we set $\epsilon=0.5$ leading to the bound of raw moment as $\E\left[|y_t|^{1.5}\right] = \alpha s_m^{1.5}/(\alpha-1.5) = 4s_m^{1.5}$. We take the maximum of $4s_m^{1.5}$ among all arms as the bound of the $1.5$-th raw moment. We generate arms and the parameter similar to S1 and S2.

In figures, we show the average of cumulative payoffs with time evolution over ten independent repetitions for each dataset, and show error bars of a standard variance for comparing the robustness of algorithms. For S1 and S2, we run MENU and MoM and set $T=2\times 10^{4}$. For S3 and S4, we run TOFU and CRT and set $T=1\times 10^{4}$. For all algorithms, we set $\lambda = 1.0$, and $\delta = 0.1$.


\begin{figure}[t]
  \centering
\subfigure[S1]{\includegraphics[width=0.48\textwidth]{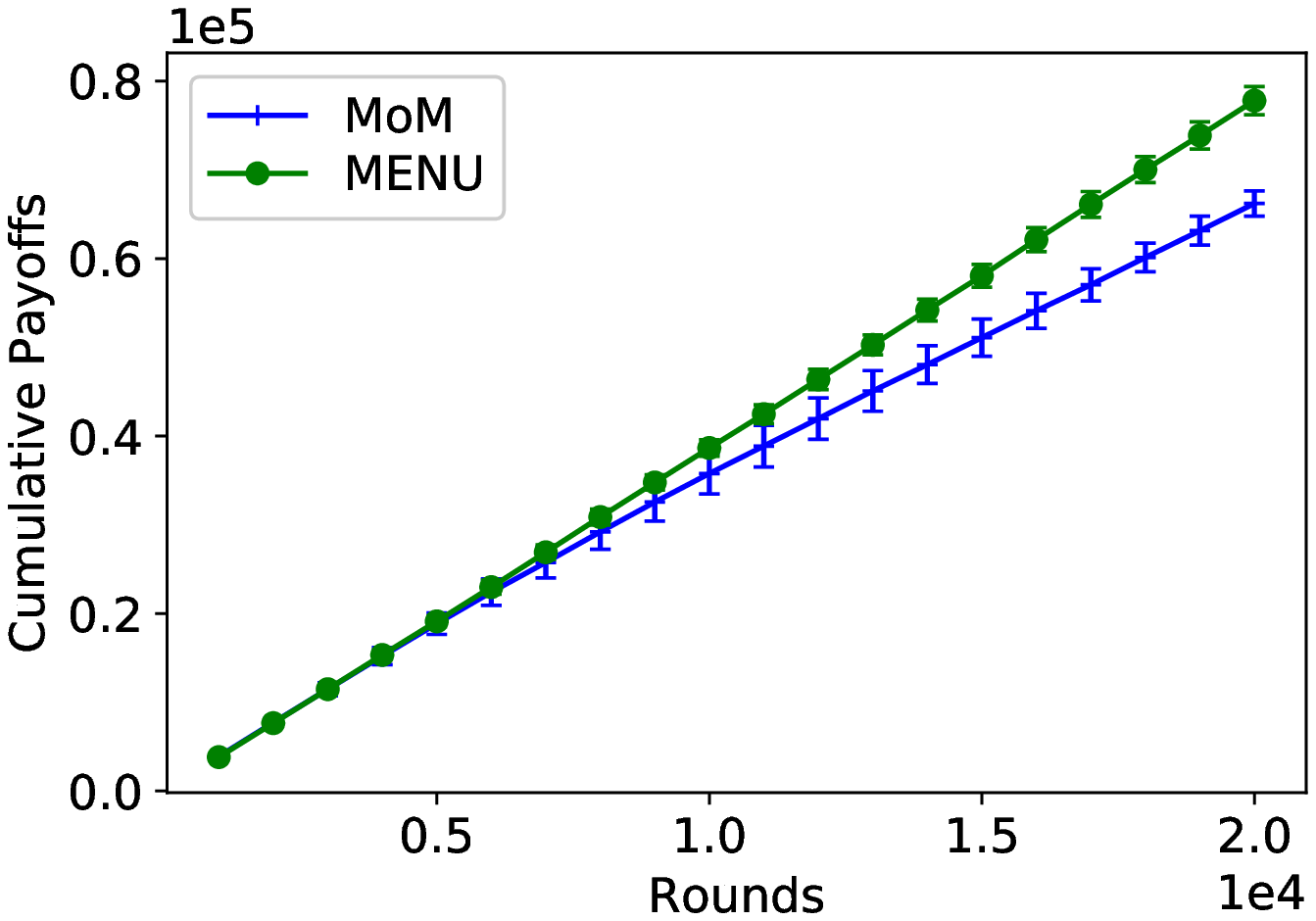}}
\subfigure[S2]{\includegraphics[width=0.492\textwidth]{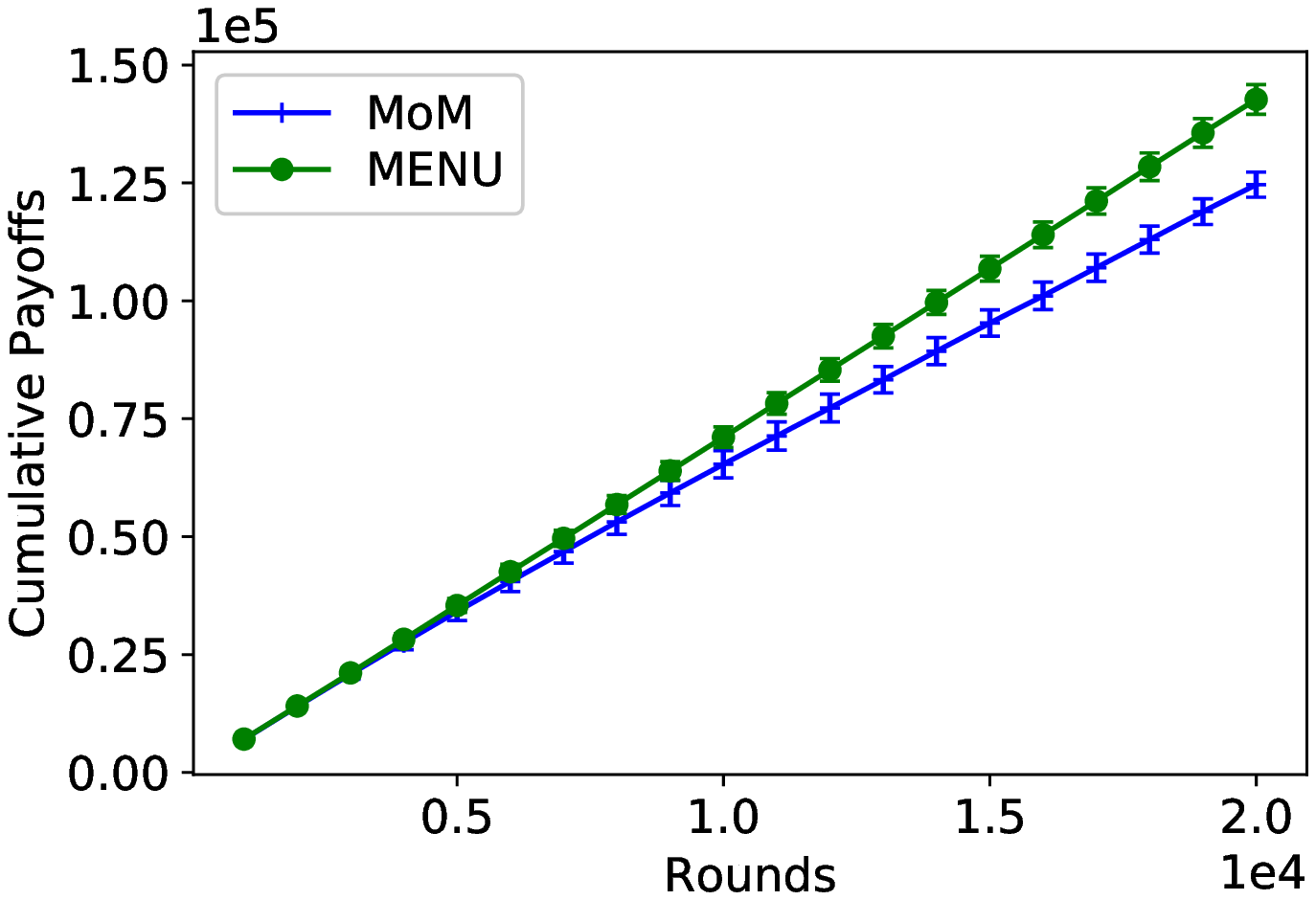}}
\subfigure[S3]{\includegraphics[width=0.492\textwidth]{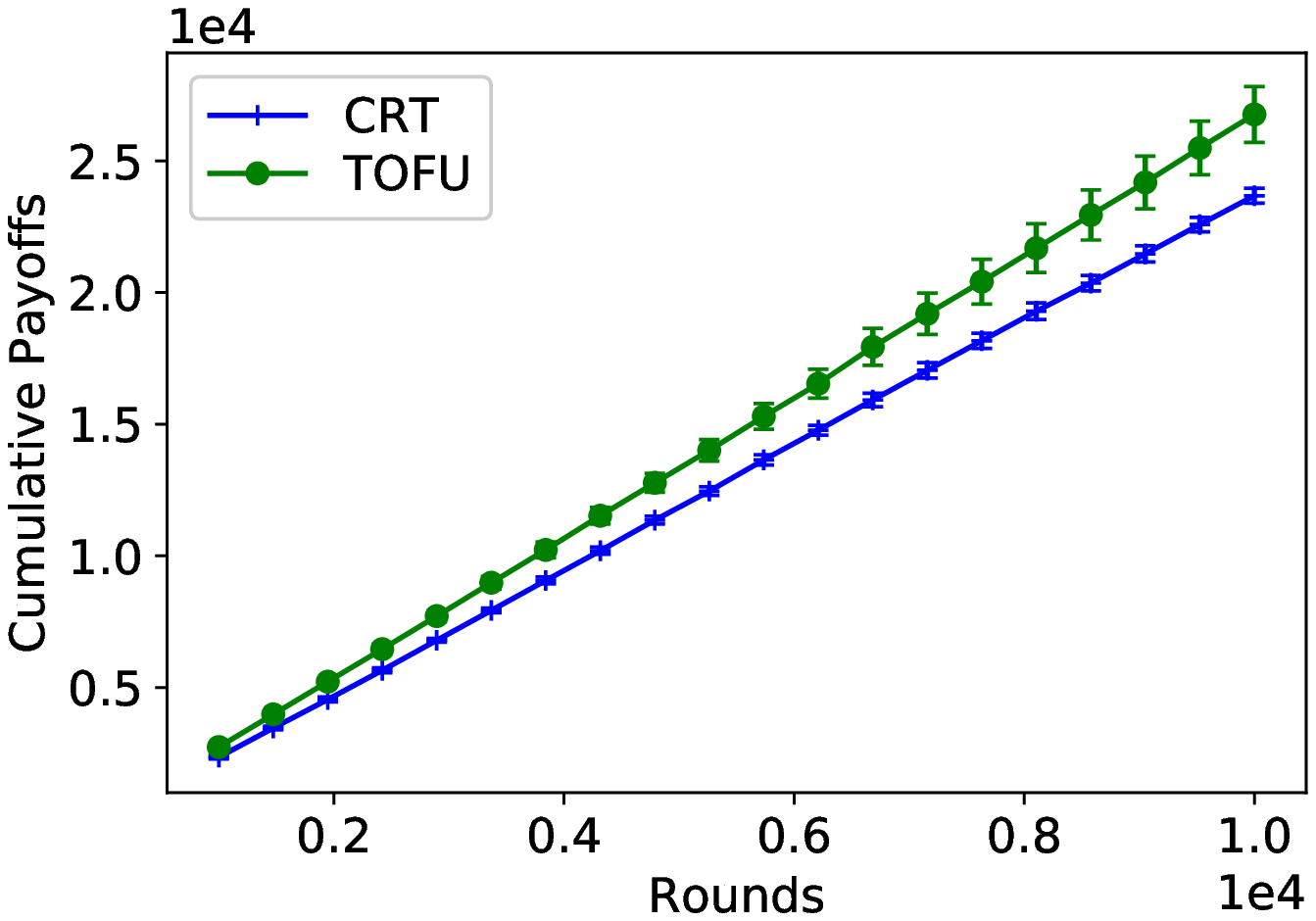}}
\subfigure[S4]{\includegraphics[width=0.48\textwidth]{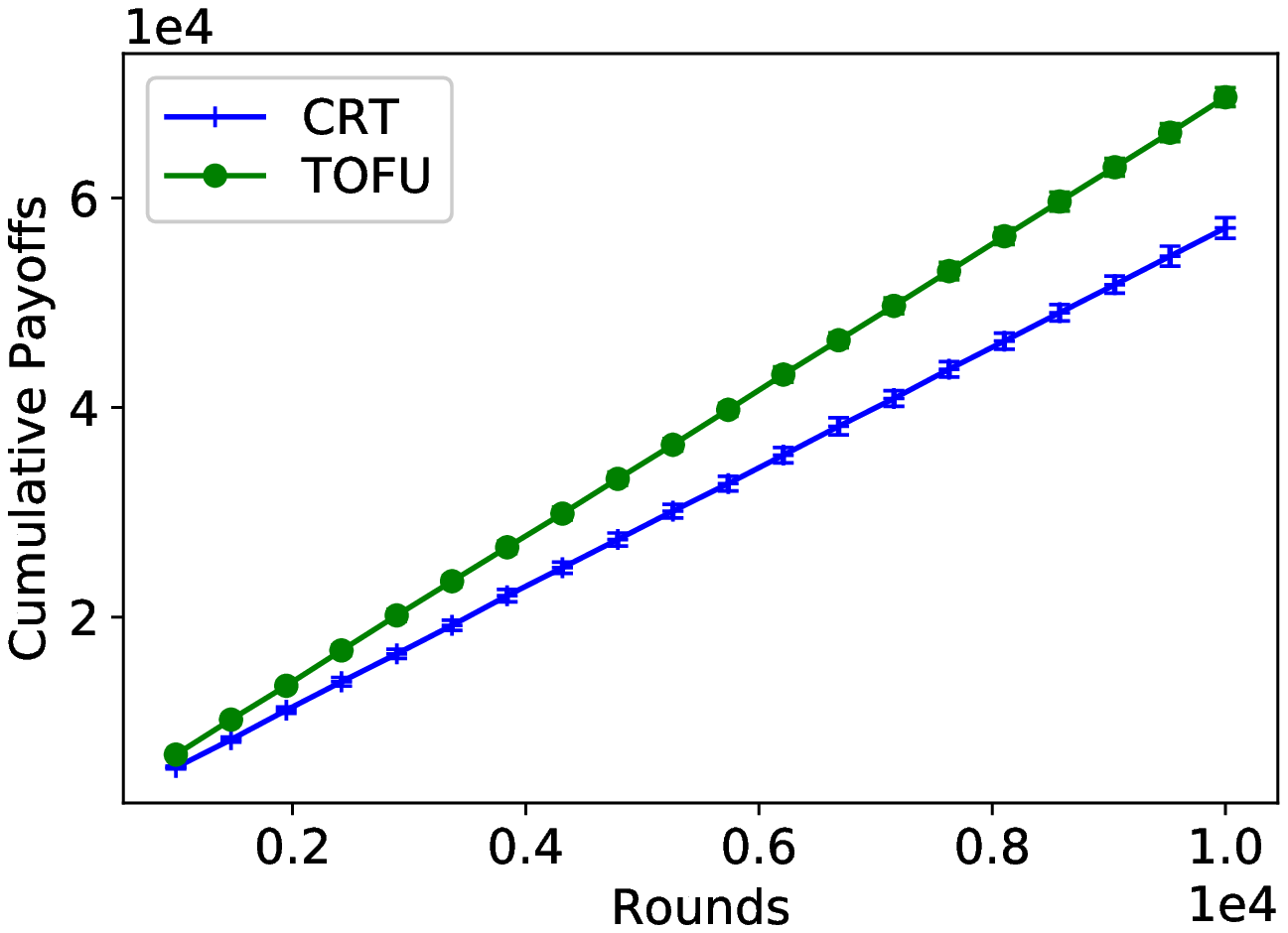}}
\caption{Comparison of cumulative payoffs for synthetic datasets S1-S4 with four algorithms.}\label{fig:results}
\end{figure}

\subsection{Results and Discussions}
We show experimental results in Figure~\ref{fig:results}. From the figure, we clearly find that our proposed two algorithms outperform MoM and CRT, which is consistent with the theoretical results in Theorems~\ref{thm:menureg} and \ref{thm:tofureg}. We also evaluate our algorithms with other synthetic datasets, as well as different $\lambda$ and $\delta$, and observe similar superiority of MENU and TOFU. Finally, for further comparison on regret, complexity and storage of four algorithms, we list the results shown in Table~\ref{table:1}.

\begin{table}[!h]
\centering
\caption{Comparison on regret, complexity and storage of four algorithms.}\label{table:1}
\begin{tabular}{c|c|c|c|c}
\toprule
algorithm & MoM & MENU & CRT & TOFU\\\cmidrule{1-5}
regret & $\widetilde O(T^{\frac{1+2\epsilon}{1+3\epsilon}})$ & $\widetilde O(T^\frac{1}{1+\epsilon})$ & $\widetilde O(T^{\frac{1}{2}+\frac{1}{2(1+\epsilon)}})$ & $\widetilde O(T^\frac{1}{1+\epsilon})$\\\cmidrule{1-5}
complexity & $O(T)$ & $O(T\log T)$ & $O(T)$ & $O(T^2)$\\\cmidrule{1-5}
storage & $O(1)$ & $O(\log T)$ & $O(1)$ & $O(T)$\\
\bottomrule
\end{tabular}
\end{table}

\section{Conclusion}
We have studied the problem of LinBET, where stochastic payoffs are characterized by finite $(1+\epsilon)$-th moments with $\epsilon\in(0,1]$. We broke the traditional assumption of sub-Gaussian noises in payoffs of bandits, and derived theoretical guarantees based on the prior information of bounds on finite moments. We rigorously analyzed the lower bound of LinBET, and developed two novel bandit algorithms with regret upper bounds matching the lower bound up to polylogarithmic factors. Two novel algorithms were developed based on median of means and truncation. In the sense of polynomial dependence on $T$, we provided optimal algorithms for the problem of LinBET, and thus solved an open problem, which has been pointed out by~\cite{medina2016no}. Finally, our proposed algorithms have been evaluated based on synthetic datasets, and outperformed the state-of-the-art results. Since both algorithms in this paper require a priori knowledge of $\epsilon$, future directions in this line of research include automatic learning of LinBET without information of distributional moments, and evaluation of our proposed algorithms in real-world scenarios.

\vspace{-0.1in}
\section*{Acknowledgments}
\vspace{-0.1in}
The work described in this paper was partially supported by the Research Grants Council of the Hong Kong Special Administrative Region, China (No. CUHK 14208815 and No. CUHK 14210717 of the General Research Fund), and Microsoft Research Asia (2018 Microsoft Research Asia Collaborative Research Award).

\vspace{-0.1in}
\bibliography{all}

\begin{thebibliography}{31}
\providecommand{\natexlab}[1]{#1}
\providecommand{\url}[1]{\texttt{#1}}
\expandafter\ifx\csname urlstyle\endcsname\relax
  \providecommand{\doi}[1]{doi: #1}\else
  \providecommand{\doi}{doi: \begingroup \urlstyle{rm}\Url}\fi

\bibitem[Abbasi-Yadkori et~al.(2011)Abbasi-Yadkori, P{\'a}l, and
  Szepesv{\'a}ri]{abbasi2011improved}
Y.~Abbasi-Yadkori, D.~P{\'a}l, and C.~Szepesv{\'a}ri.
\newblock Improved algorithms for linear stochastic bandits.
\newblock In \emph{Advances in Neural Information Processing Systems}, pages
  2312--2320, 2011.

\bibitem[Agrawal(1995)]{agrawal1995sample}
R.~Agrawal.
\newblock Sample mean based index policies by ${O}(\log n)$ regret for the
  multi-armed bandit problem.
\newblock \emph{Advances in Applied Probability}, 27\penalty0 (4):\penalty0
  1054--1078, 1995.

\bibitem[Agrawal and Goyal(2012)]{agrawal2012analysis}
S.~Agrawal and N.~Goyal.
\newblock Analysis of {T}hompson sampling for the multi-armed bandit problem.
\newblock In \emph{Conference on Learning Theory}, pages 39--1, 2012.

\bibitem[Audibert et~al.(2011)Audibert, Catoni, et~al.]{audibert2011robust}
J.-Y. Audibert, O.~Catoni, et~al.
\newblock Robust linear least squares regression.
\newblock \emph{The Annals of Statistics}, 39\penalty0 (5):\penalty0
  2766--2794, 2011.

\bibitem[Auer(2002)]{auer2002using}
P.~Auer.
\newblock Using confidence bounds for exploitation-exploration trade-offs.
\newblock \emph{Journal of Machine Learning Research}, 3\penalty0
  (Nov):\penalty0 397--422, 2002.

\bibitem[Bubeck(2010)]{bubeck2010bandits}
S.~Bubeck.
\newblock \emph{Bandits games and clustering foundations}.
\newblock PhD thesis, Universit{\'e} des Sciences et Technologie de Lille-Lille
  I, 2010.

\bibitem[Bubeck et~al.(2012)Bubeck, Cesa-Bianchi, et~al.]{bubeck2012regret}
S.~Bubeck, N.~Cesa-Bianchi, et~al.
\newblock Regret analysis of stochastic and nonstochastic multi-armed bandit
  problems.
\newblock \emph{Foundations and Trends{\textregistered} in Machine Learning},
  5\penalty0 (1):\penalty0 1--122, 2012.

\bibitem[Bubeck et~al.(2013)Bubeck, Cesa-Bianchi, and
  Lugosi]{bubeck2013bandits}
S.~Bubeck, N.~Cesa-Bianchi, and G.~Lugosi.
\newblock Bandits with heavy tail.
\newblock \emph{IEEE Transactions on Information Theory}, 59\penalty0
  (11):\penalty0 7711--7717, 2013.

\bibitem[Carpentier and Valko(2014)]{carpentier2014extreme}
A.~Carpentier and M.~Valko.
\newblock Extreme bandits.
\newblock In \emph{Advances in Neural Information Processing Systems}, pages
  1089--1097, 2014.

\bibitem[Chapelle and Li(2011)]{chapelle2011empirical}
O.~Chapelle and L.~Li.
\newblock An empirical evaluation of {T}hompson sampling.
\newblock In \emph{Advances in Neural Information Processing Systems}, pages
  2249--2257, 2011.

\bibitem[Chu et~al.(2011)Chu, Li, Reyzin, and Schapire]{chu2011contextual}
W.~Chu, L.~Li, L.~Reyzin, and R.~Schapire.
\newblock Contextual bandits with linear payoff functions.
\newblock In \emph{Proceedings of the Fourteenth International Conference on
  Artificial Intelligence and Statistics}, pages 208--214, 2011.

\bibitem[Cont and Bouchaud(2000)]{cont2000herd}
R.~Cont and J.-P. Bouchaud.
\newblock Herd behavior and aggregate fluctuations in financial markets.
\newblock \emph{Macroeconomic Dynamics}, 4\penalty0 (2):\penalty0 170--196,
  2000.

\bibitem[Dani et~al.(2008{\natexlab{a}})Dani, Hayes, and
  Kakade]{dani2008stochastic}
V.~Dani, T.~P. Hayes, and S.~M. Kakade.
\newblock Stochastic linear optimization under bandit feedback.
\newblock In \emph{Conference on Learning Theory}, pages 355--366,
  2008{\natexlab{a}}.

\bibitem[Dani et~al.(2008{\natexlab{b}})Dani, Kakade, and Hayes]{dani2008price}
V.~Dani, S.~M. Kakade, and T.~P. Hayes.
\newblock The price of bandit information for online optimization.
\newblock In \emph{Advances in Neural Information Processing Systems}, pages
  345--352, 2008{\natexlab{b}}.

\bibitem[Gittins et~al.(2011)Gittins, Glazebrook, and Weber]{gittins2011multi}
J.~Gittins, K.~Glazebrook, and R.~Weber.
\newblock \emph{Multi-armed bandit allocation indices}.
\newblock John Wiley \& Sons, 2011.

\bibitem[Hsu and Sabato(2014)]{hsu2014heavy}
D.~Hsu and S.~Sabato.
\newblock Heavy-tailed regression with a generalized median-of-means.
\newblock In \emph{International Conference on Machine Learning}, pages 37--45,
  2014.

\bibitem[Hsu and Sabato(2016)]{hsu2016loss}
D.~Hsu and S.~Sabato.
\newblock Loss minimization and parameter estimation with heavy tails.
\newblock \emph{The Journal of Machine Learning Research}, 17\penalty0
  (1):\penalty0 543--582, 2016.

\bibitem[Lai and Robbins(1985)]{lai1985asymptotically}
T.~L. Lai and H.~Robbins.
\newblock Asymptotically efficient adaptive allocation rules.
\newblock \emph{Advances in Applied Mathematics}, 6\penalty0 (1):\penalty0
  4--22, 1985.

\bibitem[Lattimore(2017)]{lattimore2017scale}
T.~Lattimore.
\newblock A scale free algorithm for stochastic bandits with bounded kurtosis.
\newblock In \emph{Advances in Neural Information Processing Systems}, pages
  1583--1592, 2017.

\bibitem[Lattimore et~al.(2014)Lattimore, Crammer, and
  Szepesv{\'a}ri]{lattimore2014optimal}
T.~Lattimore, K.~Crammer, and C.~Szepesv{\'a}ri.
\newblock Optimal resource allocation with semi-bandit feedback.
\newblock In \emph{Proceedings of the Thirtieth Conference on Uncertainty in
  Artificial Intelligence}, pages 477--486. AUAI Press, 2014.

\bibitem[Li et~al.(2010)Li, Chu, Langford, and Schapire]{li2010contextual}
L.~Li, W.~Chu, J.~Langford, and R.~E. Schapire.
\newblock A contextual-bandit approach to personalized news article
  recommendation.
\newblock In \emph{Proceedings of the Nineteenth International Conference on
  World Wide Web}, pages 661--670. ACM, 2010.

\bibitem[Liebeherr et~al.(2012)Liebeherr, Burchard, and
  Ciucu]{liebeherr2012delay}
J.~Liebeherr, A.~Burchard, and F.~Ciucu.
\newblock Delay bounds in communication networks with heavy-tailed and
  self-similar traffic.
\newblock \emph{IEEE Transactions on Information Theory}, 58\penalty0
  (2):\penalty0 1010--1024, 2012.

\bibitem[Medina and Yang(2016)]{medina2016no}
A.~M. Medina and S.~Yang.
\newblock No-regret algorithms for heavy-tailed linear bandits.
\newblock In \emph{International Conference on Machine Learning}, pages
  1642--1650, 2016.

\bibitem[Munos et~al.(2014)]{munos2014bandits}
R.~Munos et~al.
\newblock From bandits to monte-carlo tree search: The optimistic principle
  applied to optimization and planning.
\newblock \emph{Foundations and Trends{\textregistered} in Machine Learning},
  7\penalty0 (1):\penalty0 1--129, 2014.

\bibitem[Robbins et~al.(1952)]{robbins1952some}
H.~Robbins et~al.
\newblock Some aspects of the sequential design of experiments.
\newblock \emph{Bulletin of the American Mathematical Society}, 58\penalty0
  (5):\penalty0 527--535, 1952.

\bibitem[Roberts et~al.(2015)Roberts, Boonstra, and
  Breakspear]{roberts2015heavy}
J.~A. Roberts, T.~W. Boonstra, and M.~Breakspear.
\newblock The heavy tail of the human brain.
\newblock \emph{Current Opinion in Neurobiology}, 31:\penalty0 164--172, 2015.

\bibitem[Seldin et~al.(2012)Seldin, Laviolette, Cesa-Bianchi, Shawe-Taylor, and
  Auer]{seldin2012pac}
Y.~Seldin, F.~Laviolette, N.~Cesa-Bianchi, J.~Shawe-Taylor, and P.~Auer.
\newblock {PAC-B}ayesian inequalities for martingales.
\newblock \emph{IEEE Transactions on Information Theory}, 58\penalty0
  (12):\penalty0 7086--7093, 2012.

\bibitem[Shao and Nikias(1993)]{shao1993signal}
M.~Shao and C.~L. Nikias.
\newblock Signal processing with fractional lower order moments: stable
  processes and their applications.
\newblock \emph{Proceedings of the IEEE}, 81\penalty0 (7):\penalty0 986--1010,
  1993.

\bibitem[Thompson(1933)]{thompson1933likelihood}
W.~R. Thompson.
\newblock On the likelihood that one unknown probability exceeds another in
  view of the evidence of two samples.
\newblock \emph{Biometrika}, 25\penalty0 (3/4):\penalty0 285--294, 1933.

\bibitem[Vakili et~al.(2013)Vakili, Liu, and Zhao]{vakili2013deterministic}
S.~Vakili, K.~Liu, and Q.~Zhao.
\newblock Deterministic sequencing of exploration and exploitation for
  multi-armed bandit problems.
\newblock \emph{IEEE Journal of Selected Topics in Signal Processing},
  7\penalty0 (5):\penalty0 759--767, 2013.

\bibitem[Yu et~al.(2018)Yu, Shao, Lyu, and King]{yupure}
X.~Yu, H.~Shao, M.~R. Lyu, and I.~King.
\newblock Pure exploration of multi-armed bandits with heavy-tailed payoffs.
\newblock In \emph{Proceedings of the Thirty-Fourth Conference on Uncertainty
  in Artificial Intelligence}, pages 937--946. AUAI Press, 2018.

\end{thebibliography}

\begin{appendices}
\section{Proof of Theorem~\ref{thm:lb} (Lower Bound of LinBET)}\label{apx:lb}
%
%
%

We prove the lower bound for $d\geq 2$. Assume $d$ is even (when $d$ is odd, similar results can be easily derived by considering the first $d-1$ dimensions). For $D_t\subseteq\R^d$ with $t\in[T]$, we fix the decision set as $D_1=\cdots=D_T=D_{(d)}$. Then, the fixed decision set is constructed as $D_{(d)} \triangleq \{(x_1,\cdots,x_d)\in \R_+^d: x_1 + x_2 = \cdots = x_{d-1} + x_d = 1\}$, which is a subset of intersection of the cube $[0,1]^d$ and the hyperplane $x_1+\cdots+x_d = d/2$. We define a set $S_d \triangleq \{(\theta_1, \cdots, \theta_d): \forall i \in \left[ d/2 \right], (\theta_{2i-1}, \theta_{2i}) \in  \{(2\Delta,\Delta),(\Delta,2\Delta)\}\}$ with $\Delta\in (0, 1/d]$. The payoff functions take values in $\{0, (1/\Delta)^{\frac{1}{\epsilon}}\}$ with $\epsilon \in (0,1]$, for every $x \in D_{(d)}$, the expected payoff is $\theta^\top_* x$, where $\theta_*$ is the underlying parameter drawn from $S_d$. To be more specific, we have the payoff function of $x$ as
\begin{align}\label{eqn:lb.payoff}
	y(x) = \begin{cases}
	\left(\frac{1}{\Delta}\right)^{\frac{1}{\epsilon}}  &\mbox{with a probability of } \Delta^{\frac{1}{\epsilon}}\theta^\top_* x,\\
	0 & \mbox{with a probability of } 1-\Delta^{\frac{1}{\epsilon}}\theta^\top_* x.
	\end{cases}
\end{align}
In this setting, the $(1+\epsilon)$-th raw moments of payoffs are bounded by $d$ and $|\theta^\top_* x|\leq 1$. 
We start the proof with the $2$-dimensional case in Subsection~\ref{ssec:lb.2}. Its extension to the general case (i.e., $d>2$) is provided in Subsection~\ref{ssec:lb.g}. Though we set a fixed decision set in the proofs, we can easily extend the lower bound here to the setting of time-varying decision sets, as discussed by~\cite{dani2008stochastic}.
\subsection{$d = 2$ Case}\label{ssec:lb.2}
Let $\mu_0 = (\Delta,\Delta)$, $\mu_1 = (2\Delta,\Delta)$ and $\mu_2 = (\Delta,2\Delta)$. The $2$-dimensional decision set is $D_{(2)} = \{(x_1,x_2) \in \R_+^2: x_1+x_2 = 1\}$. Our payoff functions take values in $\{0, (1/\Delta)^{\frac{1}{\epsilon}}\}$, and for every $x \in D_{(2)}$, the expected payoff is $\theta_*^\top x$, where $\theta_*$ is chosen uniformly at random from $\{\mu_1, \mu_2\}$. It is easy to find $\mu_j^\top x = \Delta(1 + x_j)$ which is maximized at $x_j = 1$ for $j \in \{1,2\}$, and $\mu_0^\top x = \Delta$ for any $x\in D_{(2)}$. 
\begin{lma}
	If $\theta_*$ is chosen uniformly at random from $\{\mu_1,\mu_2\}$, and the payoff for each $x \in D_{(2)}$ is in $\{0, (1/\Delta)^{\frac{1}{\epsilon}}\}$ with mean $\theta_*^\top x$, then for every algorithm $\A$ and every $T \geq 1$, the regret satisfies
	\begin{align}
	\E [R(\A, T )] \geq \frac{1}{96}T^{\frac{1}{1+\epsilon}}.
	\end{align}
\end{lma}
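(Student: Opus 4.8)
The plan is to reduce the regret lower bound to a two-point hypothesis testing argument in the spirit of \cite{bubeck2010bandits}. First I would rewrite the regret in a convenient form: since $\mu_j^\top x = \Delta(1+x_j)$ is maximized at the vertex where $x_j=1$, when $\theta_* = \mu_1$ the optimal arm is $(1,0)$ and the instantaneous regret of playing $x_t = (x_{t,1}, x_{t,2})$ is $\Delta(1-x_{t,1}) = \Delta x_{t,2}$; symmetrically, when $\theta_* = \mu_2$ it is $\Delta x_{t,1} = \Delta(1-x_{t,2})$. Hence, writing $\Pr_j$ and $\E_j$ for the law of the interaction under $\theta_* = \mu_j$, the Bayes regret satisfies
\begin{align*}
\E[R(\A,T)] = \frac{\Delta}{2}\left(\E_1\left[\sum_{t=1}^T x_{t,2}\right] + \E_2\left[\sum_{t=1}^T (1-x_{t,2})\right]\right) \geq \frac{\Delta}{2}\left(\E_1[S_T] + T - \E_2[S_T]\right),
\end{align*}
where $S_T \triangleq \sum_{t=1}^T x_{t,2} \in [0,T]$. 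Now I would bound $\E_1[S_T] - \E_2[S_T]$ from below in terms of the total variation distance between the two interaction measures, using that $S_T/T \in [0,1]$: $\E_2[S_T] - \E_1[S_T] \leq T \cdot \|\Pr_1 - \Pr_2\|_{\mathrm{TV}}$. Combining, $\E[R(\A,T)] \geq \tfrac{\Delta T}{2}(1 - \|\Pr_1-\Pr_2\|_{\mathrm{TV}})$.

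The core of the argument is then to control $\|\Pr_1 - \Pr_2\|_{\mathrm{TV}}$ via Pinsker's inequality and the chain rule for KL divergence over the $T$ rounds. Here the role of the auxiliary distribution $\mu_0 = (\Delta,\Delta)$ becomes clear: rather than comparing $\Pr_1$ and $\Pr_2$ directly, I would route through $\Pr_0$ (the law under $\theta_* = \mu_0$), for which every arm gives the same payoff distribution $\mathrm{Bernoulli}(\Delta^{1/\epsilon}\cdot\Delta)$ scaled to $\{0,(1/\Delta)^{1/\epsilon}\}$, independent of the arm played; thus under $\Pr_0$ the observed payoffs carry no information and $S_T$ behaves identically regardless of $\A$'s choices. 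The chain rule gives $\KL(\Pr_0 \| \Pr_j) = \sum_{t=1}^T \E_0[\KL(\mathrm{Ber}(\Delta^{1/\epsilon}\mu_0^\top x_t) \| \mathrm{Ber}(\Delta^{1/\epsilon}\mu_j^\top x_t))]$. The key computation is that a single-round KL divergence between these Bernoulli payoff laws is $O(\Delta^{2/\epsilon}\cdot \Delta^2 / \Delta) = O(\Delta^{(2/\epsilon)+1})$ — this is where the parameter $\epsilon$ enters: the success probabilities are of order $\Delta^{1/\epsilon}\cdot\Delta = \Delta^{(1+\epsilon)/\epsilon}$, which is tiny, so $\KL(\mathrm{Ber}(p)\|\mathrm{Ber}(q)) \approx (p-q)^2/q$ with $|p-q| = \Delta^{1/\epsilon}\cdot\Delta$ and $q \asymp \Delta^{1/\epsilon}\cdot\Delta$, yielding a per-round bound of order $\Delta^{1/\epsilon}\cdot\Delta^2 \cdot \Delta^{-1/\epsilon}\cdot\Delta^{-1} \cdot$ wait — more carefully, $(p-q)^2/q \asymp \Delta^{2/\epsilon+2}/\Delta^{1/\epsilon+1} = \Delta^{1/\epsilon+1}$. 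Summing over $t$ gives $\KL(\Pr_0\|\Pr_j) = O(T\Delta^{1+1/\epsilon})$, hence by Pinsker and the triangle inequality for TV, $\|\Pr_1 - \Pr_2\|_{\mathrm{TV}} \leq \|\Pr_1-\Pr_0\|_{\mathrm{TV}} + \|\Pr_0-\Pr_2\|_{\mathrm{TV}} = O(\sqrt{T\Delta^{1+1/\epsilon}})$.

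Plugging this back, $\E[R(\A,T)] \geq \tfrac{\Delta T}{2}\left(1 - c\sqrt{T\Delta^{1+1/\epsilon}}\right)$ for an absolute constant $c$, and then I would optimize: choosing $\Delta = \tfrac{1}{12}T^{-\epsilon/(1+\epsilon)}$ makes $T\Delta^{1+1/\epsilon} = T \cdot 12^{-(1+1/\epsilon)} T^{-(1+\epsilon)/(1+\epsilon)\cdot \ldots}$ — concretely this choice forces $T\Delta^{(1+\epsilon)/\epsilon}$ to be a small constant so that the bracket is bounded below by an absolute constant like $1/4$, giving $\E[R(\A,T)] \geq \tfrac{1}{2}\cdot\tfrac{1}{12}T^{-\epsilon/(1+\epsilon)}\cdot T \cdot \tfrac{1}{4} = \tfrac{1}{96}T^{1/(1+\epsilon)}$. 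One must also check $\Delta \in (0,1/2]$ (so $\theta_* \in S_2$ is legitimate and $\Delta^{1/\epsilon}\theta_*^\top x \leq 1$ is a valid probability), which holds for $T \geq 1$ since $\Delta \leq 1/12$. I expect the main obstacle to be the single-round KL estimate and, more precisely, getting the constants in the "information" bound tight enough that the final constant comes out to exactly $1/96$ rather than something slightly worse — this requires care with the inequality $\KL(\mathrm{Ber}(p)\|\mathrm{Ber}(q)) \leq (p-q)^2/(q(1-q))$ and with the fact that the payoff scale $(1/\Delta)^{1/\epsilon}$ cancels correctly against the Bernoulli parameter in the KL (the KL between the scaled two-point distributions equals the KL between the underlying Bernoullis, since scaling the support is a measurable bijection). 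The reduction from the Bayes regret to the minimax regret over $S_2$ is immediate, and the passage from $d=2$ to general $d$ (deferred to Subsection~\ref{ssec:lb.g}) will be by a tensorization/summation over the $d/2$ independent coordinate-pairs.
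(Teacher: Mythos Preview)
Your proposal is correct and follows the same overall blueprint as the paper: a two-point testing argument with the auxiliary parameter $\mu_0$, Pinsker's inequality, the chain rule for KL on the $T$-round interaction, the per-round Bernoulli KL bound $\KL\bigl(\mathrm{Ber}(\Delta^{(1+\epsilon)/\epsilon})\,\|\,\mathrm{Ber}(\Delta^{(1+\epsilon)/\epsilon}(1+x_j))\bigr)\le 2\Delta^{(1+\epsilon)/\epsilon}$, and the same choice $\Delta = \tfrac{1}{12}T^{-\epsilon/(1+\epsilon)}$.

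The execution differs in one place worth noting. The paper works with the empirical arm distribution and splits the integral $\int x_j\,\d\PPP_j$ at $x_j=1/2$, then applies Pinsker event-wise to compare $\PPP_j(X_j\le 1/2)$ and $\PPP_j(X_j>1/2)$ with the corresponding probabilities under $\PPP_0$; this produces the intermediate bound $T\Delta\bigl(\tfrac14-\tfrac32\sqrt{T\Delta^{(1+\epsilon)/\epsilon}}\bigr)$. You instead bound the Bayes regret directly as $\tfrac{\Delta}{2}(T-(\E_2[S_T]-\E_1[S_T]))$ and control the expectation gap by $T\cdot\|\Pr_1-\Pr_2\|_{\mathrm{TV}}$, routing through $\Pr_0$ via the triangle inequality. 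Your route is the more standard Le~Cam formulation and avoids the half-space split entirely; it also handles randomized algorithms without a separate case (the algorithm's internal randomness has identical law under $\Pr_0,\Pr_1,\Pr_2$ and contributes zero to the KL). Both routes land on the same constant $1/96$ after plugging in $\Delta$; in fact your bound $\tfrac{\Delta T}{2}(1-2\sqrt{T\Delta^{(1+\epsilon)/\epsilon}})$ is slightly sharper in the bracket, so the stated constant is comfortably achieved.
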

\begin{proof}
	We consider a deterministic algorithm $\A$ first. Let $q_{x,T} = {T(x)}/{T}$, where $T(x)$ denotes the number of pulls of arm $x$. $\QQQ_T$ is the empirical distribution of arms with respect to $q_{x,T}$ and $X$ is drawn from $\QQQ_T$. We let $\PPP_j$ and $\E_j$ denote, respectively, the probability distribution of $X$ conditional on $\theta_* =\mu_j$ and the expectation conditional on $\theta_* =\mu_j$, where $j \in \{0,1,2\}$. Thus, we have $\PPP_j(X \in \event) = {\E_j[\sum_{x\in \event}T({x})]}/{T}$ for any $\event \subseteq D_{(2)}$. At each time step $t$, $x_t=(x_{t,1},x_{t,2})$ is selected. We let $y^*_t=\langle x^*_t,\theta_*\rangle$. Hence, for $j\in\{1,2\}$, we have
	\begin{align}
	&\E_j\left[\sum_{t=1}^{T}(y_t^* - y_t({x_t}))\right] = \sum_{t=1}^{T}\E_j\left[\Delta (1 - x_{t,j})\right]= T\int_{D_{(2)}} \Delta(1-x_j) \d\PPP_j(x) \nonumber\\
	&= T\Delta \left(1 - \int_{D_{(2)}}x_j\d\PPP_j(x)\right) = T\Delta \left(1 - \left(\int_{0\leq x_j \leq \frac{1}{2}}x_j\d\PPP_j(x) + \int_{\frac{1}{2}< x_j \leq 1}x_j\d\PPP_j(x)\right)\right)\nonumber\\
	&\geq T\Delta \left(1 - \left(\frac{1}{2}\PPP_j\left(0\leq X_j \leq \frac{1}{2}\right) + \PPP_j\left(\frac{1}{2} < X_j \leq 1\right)\right)\right),
	\end{align}
	which implies
	\begin{align}
	&\E [R(\A, T)] = \E_{\theta_*}\left[ \E_j\left[\sum_{t=1}^{T}(y_t^* - y_t({x_t}))\right]\right]\nonumber\\
	&\geq T \Delta\left(1 -\frac{1}{2}\sum_{j=1}^{2}\left(\frac{1}{2}\PPP_j\left(0\leq X_j \leq \frac{1}{2}\right) + \PPP_j\left(\frac{1}{2} < X_j \leq 1\right)\right)\right).
	\end{align}
	According to Pinsker's inequality, for any $\event \subseteq D_{(2)}$, we have
	\begin{align}
	\PPP_j(X\in\event) \leq \PPP_0(X\in\event) + \sqrt{\frac{1}{2}\KL(\PPP_0,\PPP_j)},
	\end{align}
	where $\KL(\PPP_0,\PPP_j)$ denotes the Kullback-Leibler divergence (simply KL divergence). 
	Hence,
	\begin{align}
	&\E [R(\A, T)] \geq T \Delta\left(1 -\frac{1}{2}\sum_{j=1}^{2}\left(\frac{1}{2}\PPP_0\left(0\leq X_j \leq \frac{1}{2}\right) + \PPP_0\left(\frac{1}{2} < X_j \leq 1\right) + \frac{3}{2}\sqrt{\frac{1}{2}\KL(\PPP_0,\PPP_j)}\right)\right)\nonumber\\
	&= T \Delta\left(\frac{1}{4} -\frac{3}{4}\sum_{j=1}^{2}\sqrt{\frac{1}{2}\KL(\PPP_0,\PPP_j)}\right).
	\end{align}
	Since $\A$ is deterministic, the sequence of received rewards $W_T \triangleq (y_1, y_2, \cdots, y_T)\in \{0, (1/\Delta)^{\frac{1}{\epsilon}}\}^T$ uniquely determines the empirical distribution $\QQQ_T$ and thus, $\QQQ_T$ conditional on $W_T$ is the same for any $\theta_*$. We let $\PPP_j^t$ be the probability distribution of $W_t = (y_1, y_2, \cdots, y_t)$ conditional on $\theta_* = \mu_j$. Based on the chain rule for KL divergence, we have
	\begin{align}
	\KL(\PPP_0,\PPP_j) \leq \KL(\PPP_0^T,\PPP_j^T).
	\end{align}
	Further, iteratively using the chain rule for KL divergence, we have
		\begin{align}
	&\KL(\PPP_0^T,\PPP_j^T) = \KL(\PPP_0^1, \PPP_j^1) + \sum_{t=2}^{T}\int_{W_{t-1}} \KL\left(\PPP_0^t(\cdot|w_{t-1}),\PPP_j^t(\cdot|w_{t-1})\right)\d\PPP_0^{t-1}(W_{t-1})\nonumber\\
	&= \KL(\PPP_0^1, \PPP_j^1) +\\
	&\sum_{t=2}^{T}\int_{x_t\in D_{(2)}}\int_{W_{t-1}|x_{t,j}=x_j} \KL\left(\Delta^{\frac{1+\epsilon}{\epsilon}},\Delta^{\frac{1+\epsilon}{\epsilon}}(1+x_{j})\right)\d\PPP_0^{t-1}(W_{t-1}|x_{t,j}=x_j)\d\PPP_0^{t-1}(x_{t,j}=x_j)\label{eqn:kl.num}\\
	&\leq 2\Delta^{\frac{1+\epsilon}{\epsilon}} + \sum_{t=2}^{T}\int_{x_t\in D_{(2)}}\int_{W_{t-1}|x_{t,j}=x_j} 2\Delta^{\frac{1+\epsilon}{\epsilon}}\d\PPP_0^{t-1}(W_{t-1}|x_{t,j}=x_j)\d\PPP_0^{t-1}(x_{t,j}=x_j)\label{eqn:kl.ineq}\\\
	&= 2T\Delta^{\frac{1+\epsilon}{\epsilon}},
	\end{align}
	where Eq.~(\ref{eqn:kl.ineq}) could be derived by setting $\Delta \leq \left(1/2\right)^{\frac{\epsilon}{1+\epsilon}}$. Note that for any $p,q \in (0,1)$, let $\PPP$ and $\QQQ$ denote the Bernoulli distribution with parameters $p$ and $q$ respectively. We denote $\KL(\PPP,\QQQ)$ as $\KL(p,q)$ in Eq.~(\ref{eqn:kl.num}).
	Therefore, we have
	\begin{align}\label{eqn:2.lb}
	\E [R(\A,T)] \geq T\Delta\left(\frac{1}{4}-\frac{3}{2}\sqrt{T\Delta^{\frac{1+\epsilon}{\epsilon}}}\right)\geq \frac{1}{96}T^{\frac{1}{1+\epsilon}},
	\end{align}
	where we set $\Delta = T^{-\frac{\epsilon}{1+\epsilon}}/12$.
	
	So far we have discussed the case where $\A$ is a deterministic algorithm. When $\A$  is a randomized algorithm, the result is the same. In particular, let $\E_{\A}$ denote the expectation with respect to the randomness of $\A$. Then, we have
	\begin{align}
	\E [R(\A,T)] = \E_{\A}\left[\E_{\theta_*}\left[\E_{j}\left[\sum_{t=1}^{T}\left(y_t^* - y_t({x_t})\right)\right]\right]\right].
	\end{align}
	If we fix the realization of the algorithm's randomization, the results of the previous steps for a deterministic algorithm apply and $\E_{\theta_*}\left[\E_{ i}\left[\sum_{t=1}^{T}(y_t^* - y_t({x_t}))\right]\right]$ could be lower bounded as before. Hence, $\E [R(\A,T)]$ is lower bounded as Eq.~(\ref{eqn:2.lb}).
\end{proof}

\subsection{General Case ($d>2$)}\label{ssec:lb.g}
Now we suppose $d>2$ is even. If $d$ is odd, we just take the first $d-1$ dimensions into consideration. Then we consider the contribution to the total expected regret from the choice of $(x_{2i-1},x_{2i})$, for all $i \in \left[d/2\right]$. We call $(x_{2i-1},x_{2i})$ the $i$-th component of $x$.

Analogously to the $d = 2$ case, we set $(\theta_{*,2i-1},\theta_{*,2i})\in\{\mu_1, \mu_2\}$. The decision region is $D_{(d)} = \{(x_1,\cdots,x_d)\in \R_+^d: x_1 + x_2 = \cdots = x_{d-1} + x_d = 1\}$. Then, by following the proof for $d=2$ case, we could derive the regret due to the $i$-th component of $x$ as
\begin{align}\label{eqn:d.lb}
	\E \left[R^{(i)}(\A,T)\right]\geq \frac{1}{96}T^{\frac{1}{1+\epsilon}},
\end{align}
where $i\in [d/2]$. Summing over the $d/2$ components of Eq.~(\ref{eqn:d.lb}) completes the proof for Theorem~\ref{thm:lb}.

\section{Proof of Theorem~\ref{thm:menureg} (Regret Analysis for the MENU Algorithm)}\label{apx:menu}
To prove Theorem~\ref{thm:menureg}, we start with proving the following two lemmas. Recall that the algorithm in the paper is based on least-squares estimate (LSE).

\begin{lma}[Confidence Ellipsoid of LSE]\label{lma:emp.est}
Let $\hat\theta_n$ denote the LSE of $\theta_*$ with the sequence of decisions $x_1, \cdots, x_n$ and observed payoffs $y_1,\cdots,y_n$.
	Assume that for all $\tau \in [n]$ and all $x_\tau \in D_\tau\subseteq \R^d$, $\E[|\eta_\tau|^{1+\epsilon}|\F_{\tau-1}] \leq c$ and $\lVert \theta_* \rVert_2 \leq S$.  Then $\hat\theta_n$ satisfies
	\begin{align}\label{eqn:lse.ce}
		\Pr\left(\lVert \hat{\theta}_n - \theta_*\rVert_{V_n}\leq (9dc)^{\frac{1}{1+\epsilon}}n^{\frac{1-\epsilon}{2(1+\epsilon)}}+\lambda^\frac{1}{2} S\right) \geq \frac{3}{4},
	\end{align}
where $\lambda>0$ is a regularization parameter and $V_n = \lambda I_d + \sum_{\tau =1}^{n}x_\tau x_\tau^\top$.
\end{lma}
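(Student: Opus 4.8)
The plan is to reduce the claim to a bound on a weighted sum of the noise variables $\eta_1,\dots,\eta_n$. First I would write out the least-squares estimate explicitly: since $\hat\theta_n = V_n^{-1}X_n^\top Y_n$ and $y_\tau = \langle x_\tau,\theta_*\rangle + \eta_\tau$, we get $Y_n = X_n\theta_* + E_n$ with $E_n = (\eta_1,\dots,\eta_n)^\top$, hence
\begin{align*}
\hat\theta_n - \theta_* = V_n^{-1}X_n^\top X_n\theta_* + V_n^{-1}X_n^\top E_n - \theta_* = -\lambda V_n^{-1}\theta_* + V_n^{-1}X_n^\top E_n.
\end{align*}
Taking the $V_n$-norm and using the triangle inequality splits the error into a bias term and a noise term: $\|\hat\theta_n-\theta_*\|_{V_n} \le \lambda\|V_n^{-1}\theta_*\|_{V_n} + \|V_n^{-1}X_n^\top E_n\|_{V_n}$. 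The bias term is controlled deterministically: $\lambda\|V_n^{-1}\theta_*\|_{V_n} = \lambda\sqrt{\theta_*^\top V_n^{-1}\theta_*} \le \lambda\cdot\lambda^{-1/2}\|\theta_*\|_2 \le \lambda^{1/2}S$, since $V_n \succeq \lambda I_d$. So it remains to show that $\|V_n^{-1}X_n^\top E_n\|_{V_n} \le (9dc)^{1/(1+\epsilon)} n^{(1-\epsilon)/(2(1+\epsilon))}$ with probability at least $3/4$.

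Next I would diagonalize. Note $\|V_n^{-1}X_n^\top E_n\|_{V_n}^2 = E_n^\top X_n V_n^{-1} X_n^\top E_n = \|V_n^{-1/2}X_n^\top E_n\|_2^2$. Writing $[u_1,\dots,u_d]^\top = V_n^{-1/2}X_n^\top \in \R^{d\times n}$ (the same object that appears in Algorithm~\ref{alg:tofu}), this is $\sum_{i=1}^d (u_i^\top E_n)^2 = \sum_{i=1}^d \big(\sum_{\tau=1}^n u_{i,\tau}\eta_\tau\big)^2$. The crucial structural fact — the one the paper flags as "interesting" in the discussion after Theorem~\ref{thm:menureg} — is that each row $u_i$ has $\|u_i\|_2 \le 1$; indeed the rows of $V_n^{-1/2}X_n^\top$ satisfy $\sum_i u_i u_i^\top = V_n^{-1/2}X_n^\top X_n V_n^{-1/2} = V_n^{-1/2}(V_n - \lambda I_d)V_n^{-1/2} \preceq I_d$, so each diagonal entry $\|u_i\|_2^2 \le 1$. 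For a fixed $i$, I would bound $|u_i^\top E_n|$ using the $(1+\epsilon)$-moment assumption. Since $\E[\eta_\tau|\F_{\tau-1}]=0$, the sum $\sum_\tau u_{i,\tau}\eta_\tau$ is a martingale (the $u_{i,\tau}$ are $\F_{\tau-1}$-measurable because they depend only on $x_1,\dots,x_\tau$), but with only $1+\epsilon \le 2$ moments I cannot use an $L^2$ argument directly; instead I expect to pass through the $(1+\epsilon)$-th moment via a von Bahr–Esseen / Bahr-type inequality for martingales, giving $\E[|u_i^\top E_n|^{1+\epsilon}] \le 2\sum_\tau |u_{i,\tau}|^{1+\epsilon}\E[|\eta_\tau|^{1+\epsilon}] \le 2c\sum_\tau |u_{i,\tau}|^{1+\epsilon}$, and then bounding $\sum_\tau |u_{i,\tau}|^{1+\epsilon} \le n^{(1-\epsilon)/2}\big(\sum_\tau u_{i,\tau}^2\big)^{(1+\epsilon)/2} \le n^{(1-\epsilon)/2}$ by Hölder (with the $\|u_i\|_2\le1$ fact). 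This produces $\E[(u_i^\top E_n)^2]$-type control only after a further step; more cleanly, I would bound $\E\big[\sum_i (u_i^\top E_n)^2\big]$ by combining the above with the power-mean inequality $(\sum_i a_i^2)^{(1+\epsilon)/2} \le \sum_i a_i^{1+\epsilon}$ to get $\E\big[(\sum_i(u_i^\top E_n)^2)^{(1+\epsilon)/2}\big] \le \sum_i \E[|u_i^\top E_n|^{1+\epsilon}] \le 2dc\,n^{(1-\epsilon)/2}$, and then apply Markov's inequality at confidence level $3/4$: $\Pr\big(\|V_n^{-1}X_n^\top E_n\|_{V_n} > t\big) \le 4\cdot 2dc\,n^{(1-\epsilon)/2} / t^{1+\epsilon}$, and choosing $t = (9dc)^{1/(1+\epsilon)}n^{(1-\epsilon)/(2(1+\epsilon))}$ makes the right side $\le 8/9 < 1/4$ (with the constant $9$ giving slack). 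Combining with the bias bound via a union-free single-event argument yields the claim.

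The main obstacle I anticipate is the martingale moment inequality step: unlike the sub-Gaussian case where Azuma/Freedman-type tail bounds are available, here one only has a bounded $(1+\epsilon)$-th conditional moment, so I must invoke the correct $L^{1+\epsilon}$ martingale inequality (von Bahr–Esseen type, valid for $1 \le 1+\epsilon \le 2$) and verify its hypotheses — in particular that $u_{i,\tau}$ is predictable, which hinges on the fact that in the MENU construction the arm $x_\tau$ (and hence $V_\tau$, and hence the row weights) is chosen before the $\tau$-th round's payoffs are observed. A secondary subtlety is bookkeeping the constants so the final probability comes out to exactly $3/4$ (or better), and correctly chaining the $d$ coordinates through the power-mean inequality rather than a crude union bound, which is what keeps the dependence on $d$ down to $d^{1/(1+\epsilon)}$ rather than $d$.
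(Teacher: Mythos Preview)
Your decomposition into the $\lambda^{1/2}S$ bias plus the noise term $\|V_n^{-1/2}X_n^\top E_n\|_2 = \big(\sum_i(u_i^\top E_n)^2\big)^{1/2}$, together with the observation $\|u_i\|_2\le1$ and hence $\|u_i\|_{1+\epsilon}\le n^{(1-\epsilon)/(2(1+\epsilon))}$, matches the paper exactly. Where you diverge is in bounding the noise term: you invoke a von Bahr--Esseen martingale inequality for $\E|u_i^\top E_n|^{1+\epsilon}$, combine coordinates via subadditivity of $x\mapsto x^{(1+\epsilon)/2}$, and finish with Markov. The paper instead uses a truncation split: it bounds $\Pr(\exists i,\tau:|u_{i,\tau}\eta_\tau|>\gamma)$ by a union bound plus Markov, and separately bounds the second moment of the $\gamma$-truncated sum, controlling cross terms through $\E[u_{i,\tau}\eta_\tau\ind_{|u_{i,\tau}\eta_\tau|\le\gamma}\mid\F_{\tau-1}]=-\E[u_{i,\tau}\eta_\tau\ind_{|u_{i,\tau}\eta_\tau|>\gamma}\mid\F_{\tau-1}]$, arriving at the same threshold $\gamma=(9dc)^{1/(1+\epsilon)}n^{(1-\epsilon)/(2(1+\epsilon))}$. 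Your route is more direct; the paper's route is more elementary in that it needs only Markov's inequality rather than a martingale $L^p$ inequality.

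There is, however, a concrete error in your predictability claim. You write that ``the $u_{i,\tau}$ are $\F_{\tau-1}$-measurable because they depend only on $x_1,\dots,x_\tau$.'' This is false: $u_i$ is the $i$-th row of $V_n^{-1/2}X_n^\top$, and $V_n^{-1/2}$ depends on \emph{all} of $x_1,\dots,x_n$, not just the first $\tau$. In the adaptive setting the later arms $x_{\tau+1},\dots,x_n$ are chosen using $\eta_\tau$, so $u_{i,\tau}$ is not $\F_{\tau-1}$-measurable, $(u_{i,\tau}\eta_\tau)_\tau$ is not a martingale-difference sequence with respect to $(\F_\tau)$, and von Bahr--Esseen does not apply as stated. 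The paper's truncation argument implicitly faces the same measurability point when it pulls $|u_{i,\tau}|^{1+\epsilon}$ out of expectations and conditions on $\F_{\tau-1}$, but it never invokes a martingale moment inequality and leans throughout on the \emph{deterministic} constraint $\sum_\tau|u_{i,\tau}|^{1+\epsilon}\le n^{(1-\epsilon)/2}$; your approach, by contrast, needs the martingale structure essentially. (A minor aside: your Markov step should read $\Pr(\cdot>t)\le 2dc\,n^{(1-\epsilon)/2}/t^{1+\epsilon}=2/9$ at the stated $t$; the ``$4$'' and ``$8/9<1/4$'' are slips.)
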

\begin{proof}
	The singular value decomposition of $X_n$ is $U\Sigma_n V^\top$, where $U$ is an $n\times d$ matrix with orthonormal columns, $V$ is a $d\times d$ unitary matrix and $\Sigma_n$ is an $n\times n$ diagonal matrix with non-negative entries. We calculate $V_n = V(\Sigma_n^2+\lambda I_d)V^\top$ and
	\begin{align}
	V_n^{-\frac{1}{2}}X_n^\top	= V\left( \Sigma_n^2 +\lambda I_d\right)^{-\frac{1}{2}}\Sigma_nU^\top.
	\end{align}
	Let $u_i^\top$ denote the $i$-th row of $V\left( \Sigma_n^2 +\lambda I_d\right)^{-\frac{1}{2}}\Sigma_nU^\top$, which leads to $\lVert u_i \rVert_2 \leq 1$. More importantly, by optimization,  we have $\lVert u_i \rVert_{1+\epsilon}\leq n^{\frac{1-\epsilon}{2(1+\epsilon)}}$. By letting $Y_n = (y_1,\cdots,y_n)$, we have
	\begin{align}\label{eqn:selfbd}
	&\lVert \hat{\theta}_n - \theta_*\rVert_{V_n} 
	=\lVert V_n^{-1}X_n^\top(Y_n - X_n\theta_*) -\lambda V_n^{-1} \theta_* \rVert_{{V_n}}\nonumber\\
	&\leq \lVert V_n^{-\frac{1}{2}}X_n^\top(Y_n - X_n\theta_*) \rVert_2 + \lambda\lVert \theta_* \rVert_{V_n^{-1}}
	\leq \sqrt{\sum_{i=1}^{d}\left(u_i^\top(Y_n - X_n\theta_*)\right)^2} +\lambda^\frac{1}{2} S.
	\end{align}
Inspired by~\cite{bubeck2013bandits, medina2016no}, we bound the desired probability by using a union bound as
	\begin{align}
	\Pr\left(\sum_{i=1}^d\left(\sum_{\tau=1}^{n}u_{i,\tau}\eta_\tau\right)^2 > \gamma^2  \right)\leq \Pr\left(\exists i, \tau, |u_{i,\tau}\eta_\tau|>\gamma \right) + \Pr\left(\sum_{i=1}^d\left(\sum_{\tau=1}^{n}u_{i,\tau}\eta_\tau \ind_{|u_{i,\tau}\eta_\tau|\leq\gamma}\right)^2 > \gamma^2\right),
	\end{align}
	where $\ind_{\{\cdot\}}$ is the indicator function. By using a union bound and Markov's inequality, the first term could be bounded as
	\begin{align}
	&\Pr\left(\exists i, \tau, |u_{i,\tau}\eta_\tau|>\gamma \right) \leq \sum_{i=1}^d\sum_{\tau=1}^{n}\Pr(|u_{i,\tau}\eta_\tau|>\gamma)\leq \frac{\sum_{i=1}^d\sum_{\tau=1}^{n}\E[|u_{i,\tau}\eta_\tau|^{1+\epsilon}]}{\gamma^{1+\epsilon}} \\
&\leq \frac{\sum_{i=1}^d\sum_{\tau=1}^{n}|u_{i,\tau}|^{1+\epsilon}c}{\gamma^{1+\epsilon}}\nonumber
\leq \frac{dc n^{\frac{1-\epsilon}{2}}}{\gamma^{1+\epsilon}}.
	\end{align}
	Based on Markov's inequality, we bound the second term as
	{\small
	\begin{align}
	&\Pr\left(\sum_{i=1}^d\left(\sum_{\tau=1}^{n}u_{i,\tau}\eta_\tau \ind_{|u_{i,\tau}\eta_\tau|\leq\gamma}\right)^2 > \gamma^2\right) \leq \frac{\E\left[\sum_{i=1}^d(\sum_{\tau=1}^{n}u_{i,\tau}\eta_\tau \ind_{|u_{i,\tau}\eta_\tau|\leq\gamma})^2\right]}{\gamma^2} \nonumber\\
	&= \sum_{i=1}^d\left(\frac{\E\left[\sum_{\tau=1}^{n}(u_{i,\tau}\eta_\tau)^2 \ind_{|u_{i,\tau}\eta_\tau|\leq\gamma}\right]}{\gamma^2} + 2\frac{\E\left[\sum_{\tau'>\tau}(u_{i,\tau}\eta_\tau) \ind_{|u_{i,\tau}\eta_\tau|\leq\gamma}(u_{i,\tau'}\eta_{\tau'})\ind_{|u_{i,\tau'}\eta_{\tau'}|\leq\gamma}\right]}{\gamma^2}\right)\nonumber\\
	&\leq \sum_{i=1}^d\left(\frac{\E\left[\sum_{\tau=1}^{n}(u_{i,\tau}\eta_\tau)^2 \ind_{|u_{i,\tau}\eta_\tau|\leq\gamma}\right]}{\gamma^2} + 2\frac{\sum_{\tau'>\tau}\E[(u_{i,\tau}\eta_\tau) \ind_{|u_{i,\tau}\eta_\tau|\leq\gamma}]\E[(u_{i,\tau'}\eta_{\tau'})\ind_{|u_{i,\tau'}\eta_{\tau'}|\leq\gamma}|\mu_{i,\tau}\eta_\tau]}{\gamma^2}\right)\nonumber\\
	&\leq \sum_{i=1}^d\left(\frac{\sum_{\tau=1}^{n}|u_{i,\tau}|^{1+\epsilon}c}{\gamma^{1+\epsilon}} + \left(\frac{\sum_{\tau=1}^{n}|u_{i,\tau}|^{1+\epsilon}c}{\gamma^{1+\epsilon}}\right)^2\right)\label{eqn:1}\\
	&\leq \frac{dc n^{\frac{1-\epsilon}{2}}}{\gamma^{1+\epsilon}} +  d\left(\frac{ n^{\frac{1-\epsilon}{2}}c}{\gamma^{1+\epsilon}}\right)^2.  
	\end{align}}
Note that Eq.~(\ref{eqn:1}) uses the fact that $\E[(u_{i,\tau}\eta_{\tau}) \ind_{|u_{i,\tau}\eta_\tau|\leq\gamma}|\F_{\tau-1}] = -\E[(u_{i,\tau}\eta_\tau) \ind_{|u_{i,\tau}\eta_\tau|>\gamma}|\F_{\tau-1}]$. Finally, setting $\gamma = (9dc)^{\frac{1}{1+\epsilon}}n^{\frac{1-\epsilon}{2(1+\epsilon)}}$ completes the proof.
\end{proof}

\begin{lma}\label{lma:mom}
	Recall $\hat\theta_{n,j}$, $\hat\theta_{n,k^*}$ and $V_n$ in MENU (i.e., Algorithm~\ref{alg:menu}). If there exists a $\gamma>0$ such that $\Pr\left(\lVert \hat\theta_{n,j} - \theta_*\rVert_{V_n} \leq \gamma \right) \geq \frac{3}{4}$ holds for all $j\in [k]$ with $k\geq 1$, then with probability at least $1-e^{-\frac{k}{24}}$, $\lVert \hat\theta_{n,k^*} - \theta_* \rVert_{V_n}\leq 3\gamma$.
\end{lma}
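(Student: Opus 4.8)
The plan is to run the standard median-of-means argument while carrying the $V_n$-weighted norm throughout. Call an index $j\in[k]$ \emph{good} if $\lVert\hat\theta_{n,j}-\theta_*\rVert_{V_n}\le\gamma$, and put $Z_j=\ind_{\{j\text{ good}\}}$. The structural fact I would invoke is that in MENU the $k$ estimators $\hat\theta_{n,1},\dots,\hat\theta_{n,k}$ are built from $k$ disjoint batches of plays (the $j$-th repetition of each arm $x_i$ enters only $\hat\theta_{n,j}$), so the indicators $Z_1,\dots,Z_k$ are independent, each with $\Pr(Z_j=1)\ge 3/4$ by the hypothesis of the lemma.

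First I would show that strictly more than half the estimators are good off a small event. Writing $S=\sum_{j=1}^k Z_j$ so that $\E[S]\ge 3k/4$, the multiplicative Chernoff bound with $\delta=1/3$ gives $\Pr(S\le k/2)\le\Pr\big(S\le(1-\tfrac13)\E[S]\big)\le\exp(-\E[S]/18)\le e^{-k/24}$, which is precisely where the constant $24$ in the statement comes from. Hence, with probability at least $1-e^{-k/24}$, we have $S>k/2$.

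Next I would argue that $S>k/2$ forces $\lVert\hat\theta_{n,k^*}-\theta_*\rVert_{V_n}\le 3\gamma$ deterministically. If $k^*$ is itself good there is nothing to prove, so assume it is not; then the set $G$ of good indices is contained in $[k]\setminus\{k^*\}$ and $|G|=S>k/2$. Two observations combine. First, for any good $j$ and any good $s\ne j$ the triangle inequality gives $\lVert\hat\theta_{n,j}-\hat\theta_{n,s}\rVert_{V_n}\le\lVert\hat\theta_{n,j}-\theta_*\rVert_{V_n}+\lVert\theta_*-\hat\theta_{n,s}\rVert_{V_n}\le 2\gamma$; since at least half of the $k-1$ indices in $[k]\setminus\{j\}$ are good, the median $r_j$ of that list is at most $2\gamma$ (using the lower-middle convention for even-length lists), and therefore $r_{k^*}=\min_{j}r_j\le 2\gamma$. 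Second, $r_{k^*}\le 2\gamma$ means the set $A=\{s\ne k^*:\lVert\hat\theta_{n,k^*}-\hat\theta_{n,s}\rVert_{V_n}\le 2\gamma\}$ has $|A|\ge\lceil(k-1)/2\rceil$, while $|G|\ge\lceil(k+1)/2\rceil$; since $A$ and $G$ both sit inside the $(k-1)$-element set $[k]\setminus\{k^*\}$ and $|A|+|G|\ge k>k-1$, pigeonhole yields some $s^*\in A\cap G$. One last triangle inequality, $\lVert\hat\theta_{n,k^*}-\theta_*\rVert_{V_n}\le\lVert\hat\theta_{n,k^*}-\hat\theta_{n,s^*}\rVert_{V_n}+\lVert\hat\theta_{n,s^*}-\theta_*\rVert_{V_n}\le 2\gamma+\gamma$, closes the argument.

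I expect the only genuinely delicate point to be the independence of the $Z_j$: because MENU chooses $x_n$ through $\hat\theta_{n-1,k^*}$, which mixes all $k$ groups, one must check that conditioning on the realized arm sequence $x_1,\dots,x_n$ preserves both the conditional independence of the $k$ noise batches and their bounded $(1+\epsilon)$-th conditional moments---a short measurability/martingale remark. Everything else is routine: a single Chernoff estimate and a pigeonhole count, where the only bookkeeping is to track floors and ceilings so that the two ``majority'' sets $A$ and $G$ are guaranteed to overlap inside $[k]\setminus\{k^*\}$; the strict inequality $S>k/2$ is exactly the slack that makes both $|A|$ and $|G|$ exceed $(k-1)/2$.
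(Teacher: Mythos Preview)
Your proposal is correct and follows the same median-of-means strategy as the paper (which cites \cite{hsu2014heavy}), but two differences in execution are worth noting. First, the paper sets the threshold at $k/3$ bad indices rather than your $k/2$: it shows $\Pr(\sum_j b_j\ge k/3)\le e^{-k/24}$ via Azuma--Hoeffding, so on the good event more than $2/3$ of the $\hat\theta_{n,j}$ lie in $\B_{V_n}(\theta_*,\gamma)$. With that stronger majority the deterministic step becomes a two-line dichotomy---if $\hat\theta_{n,j}\in\B_{V_n}(\theta_*,\gamma)$ then $r_j\le 2\gamma$, while if $\hat\theta_{n,j}\notin\B_{V_n}(\theta_*,3\gamma)$ then $r_j>2\gamma$---so the minimizer $k^*$ is forced into the $3\gamma$ ball without any pigeonhole count. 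Your route with the bare $>k/2$ majority works but requires the extra $|A|+|G|>k-1$ bookkeeping you carried out. Second, the paper phrases the concentration step as Azuma--Hoeffding rather than multiplicative Chernoff; this speaks directly to the independence worry you flagged, since Azuma only needs a martingale-difference structure for the indicators and thus sidesteps the question of full independence across the $k$ batches.
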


\begin{proof}
	The proof is inspired by~\cite{hsu2014heavy}. We define $b_j\triangleq \ind_{\lVert \hat\theta_{n,j} - \theta_*\rVert_{V_n} > \gamma}$, $p_j \triangleq \Pr(b_j = 1)$ and $\B_{V_n}(\theta_*,\gamma)\triangleq\{\theta:\lVert \theta - \theta_*\rVert_{V_n}\leq \gamma \}$. We know that $p_j <1/4$. By Azuma-Hoeffding's inquality, we have
	\begin{align}\label{eqn:mom.frc}
		\Pr\left(\sum_{j=1}^{k} b_j \geq \frac{k}{3}\right) < \Pr\left(\sum_{j=1}^{k}b_j-p_j\geq \frac{k}{12}\right) \leq e^{-\frac{k}{24}},
	\end{align}
	which means that more than $2/3$ of $\{\hat\theta_{n,1}, \cdots, \hat\theta_{n,k}\}$ are contained in $\B_{V_n}(\theta_*,\gamma)$ (denoting by this event $\event$) with probability at least $1-e^{-\frac{k}{24}}$. Note that the value $k/3$ in Eq.~(\ref{eqn:mom.frc}) could also be set as other values in $\left(k/4,k/2\right)$. Conditional on the event $\event$, by letting $r_j $ be the median of $\{\|\hat\theta_{n,j}-\hat\theta_{n,s}\|_{V_n}: s\in[k]\backslash j\}$, we have
	\begin{compactitem}
		\item If $\hat\theta_{n,j}\in \B_{V_n}(\theta_*,\gamma)$, $\lVert \hat\theta_{n,j} - \hat\theta_{n,s}\rVert_{V_n} \leq 2\gamma$ for all $\hat\theta_{n,s}\in \B_{V_n}(\theta_*,\gamma)$ by triangle inequality. Therefore, $r_j \leq 2\gamma$.
		\item If $\hat\theta_{n,j}\notin \B_{V_n}(\theta_*,3\gamma)$, $\lVert \hat\theta_{n,j} - \hat\theta_{n,s}\rVert_{V_n} > 2\gamma$ for all $\hat\theta_{n,s} \in \B_{V_n}(\theta_*,\gamma)$ by triangle inequality. Therefore, $r_j > 2\gamma$.
	\end{compactitem}
Combining the above two cases completes proof.	
\end{proof}
Based on Lemmas~\ref{lma:emp.est} and~\ref{lma:mom}, by setting $k = \lceil 24\log\left(eT/\delta\right)\rceil$, we have $\lVert \hat\theta_{n,k^*} - \theta_* \rVert_{V_n} \leq 3\left((9dc)^{\frac{1}{1+\epsilon}}n^{\frac{1-\epsilon}{2(1+\epsilon)}}+\lambda^\frac{1}{2} S\right)$ with probability at least $1-\delta/T$.
The following part of proof is standard~\citep{dani2008stochastic,abbasi2011improved}. We include it for the sake of completeness.
By letting $\beta_n = 3\left((9dc)^{\frac{1}{1+\epsilon}}n^{\frac{1-\epsilon}{2(1+\epsilon)}}+\lambda^\frac{1}{2} S\right)$, we can decompose the instantaneous regret as follows:
\begin{align}
	&r_n = \theta_*^\top x_* - \theta_*^\top x_n \leq \tilde{\theta}_n^\top x_n -  \theta_*^\top x_n \leq \left(\lVert \tilde{\theta}_n - \hat\theta_{n-1,k^*} \rVert_{V_{n-1}}+ \lVert \hat\theta_{n-1,k^*} - \theta_* \rVert_{V_{n-1}}\right)\lVert x_n \rVert_{V_{n-1}^{-1}}\nonumber\\
	& \leq 2\beta_{n-1} \lVert x_n \rVert_{V_{n-1}^{-1}},
\end{align}
where we recall that $(x_n, \tilde{\theta}_n)$ is optimistic in MENU. Note that, for $n=1$, the above inequality also holds with $V_0 = \lambda I_d$.
On the other hand, by considering $|x_t^\top\theta_*|\leq L$, we always have
\begin{align}
	r_n \leq 2L.
\end{align}

We can get that
\begin{align}
	r_n \leq 2\min\{\beta_{n-1} \lVert x_n \rVert_{V_{n-1}^{-1}},L\} \leq  2(\beta_{n-1} +L)\min\{\lVert x_n \rVert_{V_{n-1}^{-1}},1\}.
\end{align}
Following Lemma 11 of~\cite{abbasi2011improved}, we know that
{\small
\begin{align}
	\sum_{n=1}^{N}\min\{\lVert x_n \rVert_{V_{n-1}^{-1}}^2,1\} \leq 2\sum_{n=1}^{N} \log(1 + \lVert x_n \rVert_{V_{n-1}^{-1}}^2) = 2\log\left(\frac{\det(V_N)}{\det(V_0)}\right) \leq 2d\log\left( 1+\frac{ND^2}{\lambda d} \right),
\end{align}}
where $N$ is the number of epochs in MENU.
Therefore, the total regret can be upper bounded by
{\small
\begin{align}
	&R(\menu,T) \leq k\sum_{n=1}^{N} r_n \leq k\sqrt{N\sum_{n=1}^{N} r_n^2} \leq 2kN^{\frac{1}{2}}(\beta_N+L)\sqrt{\sum_{n=1}^{N}\min\{\lVert x_n \rVert_{V_{n-1}^{-1}}^2,1\}} \nonumber\\
	&\leq 6\left(\left(12dc\right)^{\frac{1}{1+\epsilon}}+\lambda^{\frac{1}{2}}S+L\right)T^{\frac{1}{1+\epsilon}}\left(24\log\left(\frac{eT}{\delta}\right)+1\right)^{\frac{\epsilon}{1+\epsilon}}\sqrt{2d\log\left( 1+\frac{TD^2}{\lambda d} \right)}.
\end{align}}
The condition of $T\geq 256+24\log(e/\delta)$ is required for $T\geq k$, which completes the proof.
\section{Proof of Theorem~\ref{thm:tofureg} (Regret Analysis for the TOFU Algorithm)}\label{apx:tofu}
\begin{lma}[Confidence Ellipsoid of Truncated Estimate]
With the sequence of decisions $x_1, \cdots, x_t$, the truncated payoffs $\{Y_i^\dagger\}_{i=1}^d$ and the parameter estimate $\theta_t^\dagger$ are defined in TOFU (i.e., Algorithm~\ref{alg:tofu}). Assume that for all $\tau \in [t]$ and all $x_\tau \in D_\tau\subseteq \R^d$, $\E[|y_\tau|^{1+\epsilon}|\F_{\tau-1}] \leq b$ and $\lVert \theta_* \rVert_2 \leq S$. With probability at least $1-\delta$, we have
	\begin{align}
		\lVert \theta_t^\dagger - \theta_* \rVert_{V_{t}} \leq 4 \sqrt{d}b^{\frac{1}{1+\epsilon}}\left(\log\left(\frac{2d}{\delta}\right)\right)^{\frac{\epsilon}{1+\epsilon}}t^{\frac{1-\epsilon}{2(1+\epsilon)}}+\lambda^\frac{1}{2} S,
	\end{align}
where $\lambda>0$ is a regularization parameter and $V_t = \lambda I_d + \sum_{\tau =1}^{t}x_\tau x_\tau^\top$.
\end{lma}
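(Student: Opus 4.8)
The plan is to follow the template of the proof of Lemma~\ref{lma:emp.est}, the only difference being that, since \tofu{} has no median-of-means boosting step, the per-round confidence ellipsoid must itself hold with probability $1-\delta$; so the two Markov-type estimates used there will be replaced by a single Bernstein/Freedman-type martingale inequality applied to a truncated sum. \emph{Reduction.} First I would write $\lVert\theta_t^\dagger-\theta_*\rVert_{V_t}=\lVert V_t^{1/2}(\theta_t^\dagger-\theta_*)\rVert_2$ and use $V_t\theta_*=X_t^\top X_t\theta_*+\lambda\theta_*$ to get $V_t^{1/2}\theta_*=V_t^{-1/2}X_t^\top M+\lambda V_t^{-1/2}\theta_*$, where $M\triangleq X_t\theta_*=(x_1^\top\theta_*,\dots,x_t^\top\theta_*)^\top$. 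Since the rows of $V_t^{-1/2}X_t^\top$ are exactly $u_1^\top,\dots,u_d^\top$ and $V_t^{1/2}\theta_t^\dagger=(u_1^\top Y_1^\dagger,\dots,u_d^\top Y_d^\dagger)^\top$, the triangle inequality together with $\lVert\lambda V_t^{-1/2}\theta_*\rVert_2=\lambda\lVert\theta_*\rVert_{V_t^{-1}}\le\lambda^{1/2}S$ yields $\lVert\theta_t^\dagger-\theta_*\rVert_{V_t}\le\sqrt{\sum_{i=1}^d(u_i^\top Y_i^\dagger-u_i^\top M)^2}+\lambda^{1/2}S$. I would also recycle from the proof of Lemma~\ref{lma:emp.est} the deterministic bounds $\lVert u_i\rVert_2\le 1$ and $\lVert u_i\rVert_{1+\epsilon}^{1+\epsilon}=\sum_{\tau=1}^t|u_{i,\tau}|^{1+\epsilon}\le t^{(1-\epsilon)/2}$. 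It then suffices to bound, for each fixed $i$, $|u_i^\top Y_i^\dagger-u_i^\top M|$ by $4b^{1/(1+\epsilon)}(\log(2d/\delta))^{\epsilon/(1+\epsilon)}t^{(1-\epsilon)/(2(1+\epsilon))}$ with probability at least $1-\delta/d$, and then union over $i\in[d]$ via $\sqrt{\sum_i a_i^2}\le\sqrt d\max_i|a_i|$.

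\emph{Bias--deviation split.} Fix $i$ and write $R_\tau\triangleq u_{i,\tau}y_\tau\ind_{\{|u_{i,\tau}y_\tau|\le b_t\}}$ (I would read the truncation of Line~10 with the two-sided threshold, consistent with Lemma~\ref{lma:emp.est}), $m_\tau\triangleq x_\tau^\top\theta_*$, so that $u_i^\top Y_i^\dagger-u_i^\top M=\sum_\tau\big(R_\tau-\E[R_\tau\mid\F_{\tau-1}]\big)+\sum_\tau\big(\E[R_\tau\mid\F_{\tau-1}]-u_{i,\tau}m_\tau\big)=:D_i+B_i$. Because $\E[\eta_\tau\mid\F_{\tau-1}]=0$, the bias term equals $B_i=-\sum_\tau\E[u_{i,\tau}y_\tau\ind_{\{|u_{i,\tau}y_\tau|>b_t\}}\mid\F_{\tau-1}]$, hence $|B_i|\le b_t^{-\epsilon}\sum_\tau|u_{i,\tau}|^{1+\epsilon}\E[|y_\tau|^{1+\epsilon}\mid\F_{\tau-1}]\le b\,b_t^{-\epsilon}t^{(1-\epsilon)/2}$ using the moment assumption and the $\ell_{1+\epsilon}$ bound. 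For $D_i$, the summands $R_\tau-\E[R_\tau\mid\F_{\tau-1}]$ form a martingale difference sequence bounded in absolute value by $2b_t$ (since $|R_\tau|\le b_t$), with predictable quadratic variation $\sum_\tau\E[R_\tau^2\mid\F_{\tau-1}]\le b_t^{1-\epsilon}\sum_\tau|u_{i,\tau}|^{1+\epsilon}\E[|y_\tau|^{1+\epsilon}\mid\F_{\tau-1}]\le b\,b_t^{1-\epsilon}t^{(1-\epsilon)/2}$. A Freedman/Bernstein inequality applied to each tail then gives $|D_i|\le\sqrt{2b\,b_t^{1-\epsilon}t^{(1-\epsilon)/2}\log(2d/\delta)}+b_t\log(2d/\delta)$ with probability at least $1-\delta/d$.

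\emph{Tuning and conclusion.} Choosing $b_t\asymp\big(b/\log(2d/\delta)\big)^{1/(1+\epsilon)}t^{(1-\epsilon)/(2(1+\epsilon))}$ (cf.\ Line~4 of Algorithm~\ref{alg:tofu}) makes each of $|B_i|$, $\sqrt{2b\,b_t^{1-\epsilon}t^{(1-\epsilon)/2}\log(2d/\delta)}$ and $b_t\log(2d/\delta)$ of order $b^{1/(1+\epsilon)}(\log(2d/\delta))^{\epsilon/(1+\epsilon)}t^{(1-\epsilon)/(2(1+\epsilon))}$ — a one-line exponent check. Summing the three contributions, absorbing constants into the factor $4$, then paying the $\sqrt d$ from the union bound over $i$ and adding back $\lambda^{1/2}S$, gives exactly the stated bound.

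\emph{Main obstacle.} The delicate part is the deviation bound on $D_i$: a usable Bernstein estimate requires simultaneously a bound on the centered increments and a finite predictable variance, and this is exactly why the payoffs must be truncated before forming the estimator — with only a finite $(1+\epsilon)$-th moment ($\epsilon<1$) the raw increments $u_{i,\tau}\eta_\tau$ have infinite variance. Everything hinges on picking the truncation level $b_t$ so that the variance proxy $b_t^{1-\epsilon}\,b\,t^{(1-\epsilon)/2}$ and the bias $b\,b_t^{-\epsilon}t^{(1-\epsilon)/2}$ are small at once, which is what produces the exponent $(1-\epsilon)/(2(1+\epsilon))$; the $\ell_{1+\epsilon}$-norm control $\lVert u_i\rVert_{1+\epsilon}^{1+\epsilon}\le t^{(1-\epsilon)/2}$ inherited from the SVD argument of Lemma~\ref{lma:emp.est} is what feeds both estimates. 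A secondary subtlety, also inherited from Lemma~\ref{lma:emp.est}, is that $u_{i,\tau}$ depends on $V_t$ and hence on the whole trajectory $x_1,\dots,x_t$ rather than only on $\F_{\tau-1}$; as in that proof, one relies on the fact that the $\ell_2$ and $\ell_{1+\epsilon}$ bounds on $u_i$ hold for every realization, and invokes the martingale structure at the same level already used in the paper.
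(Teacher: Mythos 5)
Your proposal matches the paper's proof essentially step for step: the same reduction to $\sqrt{\sum_{i=1}^d\bigl(u_i^\top(Y_i^\dagger-X_t\theta_*)\bigr)^2}+\lambda^{1/2}S$ via the SVD-based bounds $\lVert u_i\rVert_2\le 1$ and $\lVert u_i\rVert_{1+\epsilon}\le t^{\frac{1-\epsilon}{2(1+\epsilon)}}$, the same bias/deviation split of the truncated sum, the same Bernstein-for-martingales bound with variance proxy $b\,b_t^{1-\epsilon}t^{(1-\epsilon)/2}$ and bias $b\,b_t^{-\epsilon}t^{(1-\epsilon)/2}$, the same tuning of $b_t$, and the same union bound over the $d$ coordinates; the only cosmetic difference is that you quote Bernstein in its square-root form while the paper uses the linearized form $2b_t\log(2d/\delta)+V/(2b_t)$. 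You also correctly read the one-sided indicator in Line~10 of Algorithm~\ref{alg:tofu} as the two-sided truncation, which is what the paper's proof actually uses, and your remark that the measurability of $u_{i,\tau}$ with respect to $\F_{\tau-1}$ is handled only at the same level of rigor as the paper accurately reflects the state of the paper's own argument.
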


\begin{proof} Similarly to Eq.~(\ref{eqn:selfbd}), we have
	\begin{align}\label{eqn:conf}
		\lVert \theta_t^\dagger - \theta_* \rVert_{V_{t}} \leq \sqrt{\sum_{i=1}^{d}\left(u_i^\top(Y_i^\dagger- X_t\theta_*)\right)^2} +\lambda^\frac{1}{2} S.
	\end{align}
	We let $y_\tau^\dagger$ denote $Y_{i,\tau}^\dagger$ for notation simplicity as the following proof holds for all $i \in [d]$. Then with probability at least $1-\delta/d$, we have
	{\small
	\begin{align}
		&u_i^\top\left(Y_i^\dagger - X_t\theta_*\right)
		= \sum_{\tau=1}^{t}u_{i,\tau}\left(y_\tau^\dagger-\E[y_\tau|\F_{\tau-1}]\right)\\
		&=\sum_{\tau=1}^{t}u_{i,\tau}\left(y_\tau^\dagger-\E\left[y_\tau^\dagger|\F_{\tau-1}\right]-\E\left[y_\tau\ind_{|u_{i,\tau} y_\tau|>b_t}|\F_{\tau-1}\right]\right)\nonumber\\
		&\leq \left| \sum_{\tau=1}^{t}u_{i,\tau}(y_\tau^\dagger-\E[y_\tau^\dagger|\F_{\tau-1}])\right|+\left|\sum_{\tau=1}^{t}u_{i,\tau}\E[y_\tau\ind_{|u_{i,\tau} y_\tau|>b_t}|\F_{\tau-1}]\right|\nonumber\\
		&\leq \left|2b_t\log\left(\frac{2d}{\delta}\right) + \frac{1}{2b_t}\sum_{\tau=1}^{t}\E\left[u_{i,\tau}^2\left(y_\tau^\dagger-\E\left[y_\tau^\dagger|\F_{\tau-1}\right]\right)^2|\F_{\tau-1}\right]\right|+\left|\sum_{\tau=1}^{t}\E[u_{i,\tau} y_\tau\ind_{|u_{i,\tau} y_\tau|>b_t}|\F_{\tau-1}]\right|\label{eqn:bernst}\\
		&\leq 2b_t\log\left(\frac{2d}{\delta}\right) + \frac{\sum_{\tau=1}^{t}|u_{i,\tau}|^{1+\epsilon}b}{2b_t^\epsilon}+ \frac{\sum_{\tau=1}^{t}|u_{i,\tau}|^{1+\epsilon}b}{b_t^{\epsilon}}\nonumber\\
		&\leq 4b^{\frac{1}{1+\epsilon}}\left(\log\left(\frac{2d}{\delta}\right)\right)^{\frac{\epsilon}{1+\epsilon}}t^{\frac{1-\epsilon}{2(1+\epsilon)}},\label{eqn:setbt}
	\end{align}}
where Eq.~(\ref{eqn:bernst}) is obtained by applying Bernstein's inequality for martingales~\citep{seldin2012pac} and Eq.~(\ref{eqn:setbt}) is obtained by the fact that $\lVert u_i \rVert_{1+\epsilon}\leq t^{\frac{1-\epsilon}{2(1+\epsilon)}}$ and by setting $b_t = \left(b/\log(2d/\delta)\right)^{\frac{1}{1+\epsilon}}t^{\frac{1-\epsilon}{2(1+\epsilon)}}$. Combining Eq.~(\ref{eqn:conf}) and Eq.~(\ref{eqn:setbt}) completes the proof.
\end{proof}
With similar procedures to the proof of Theorem~\ref{thm:menureg}, we have the regret of TOFU as:
{\small
\begin{align}
	R(\tofu, T)\leq 2T^{\frac{1}{1+\epsilon}}\left(4 \sqrt{d}b^{\frac{1}{1+\epsilon}}\left(\log\left(\frac{2dT}{\delta}\right)\right)^{\frac{\epsilon}{1+\epsilon}}+\lambda^\frac{1}{2} S+L\right)\sqrt{2d\log\left( 1+\frac{TD^2}{\lambda d} \right)},
\end{align}}
which completes the proof.
\end{appendices}

\end{document}